\RequirePackage{fix-cm}
\documentclass[twoside,english,12pt]{article}
\usepackage[T1]{fontenc}
\usepackage[latin9]{inputenc}
\usepackage{geometry}
\geometry{verbose,tmargin=1in,bmargin=1in,lmargin=1in,rmargin=1in}
\usepackage{color}
\usepackage{babel}
\usepackage{verbatim}
\usepackage{marvosym}
\usepackage{bm}
\RequirePackage{amsmath}
\RequirePackage{amsthm}
\RequirePackage{amssymb}
\RequirePackage[sort&compress,numbers]{natbib}
\usepackage[unicode=true,pdfusetitle,
 bookmarks=true,bookmarksnumbered=false,bookmarksopen=false,
 breaklinks=false,pdfborder={0 0 1},backref=false,colorlinks=true]
 {hyperref}
\hypersetup{
 citecolor=red, linkcolor=blue}

\usepackage[framemethod=tikz]{mdframed}

\makeatletter


\theoremstyle{plain}
\newtheorem{thm}{\protect\theoremname}
\numberwithin{thm}{section}
\theoremstyle{plain}
\newtheorem{cor}{\protect\corollaryname}  
\numberwithin{cor}{section}
\theoremstyle{definition}
\newtheorem{defn}{\protect\definitionname}
\numberwithin{defn}{section}
\theoremstyle{remark}
\newtheorem{exmp}{\protect\examplename}
\numberwithin{exmp}{section}

\numberwithin{equation}{section}

\usepackage{microtype, lmodern}
\usepackage[bb=boondox]{mathalfa}
\RequirePackage{graphicx}

\makeatother

  \providecommand{\definitionname}{Definition}
\providecommand{\corollaryname}{Corollary}
\providecommand{\theoremname}{Theorem}
 \providecommand{\examplename}{Example}
 
\title{Solving Equations of Random Convex Functions\\ via Anchored Regression}
\author{Sohail~Bahmani\\\texttt{sohail.bahmani@ece.gatech.edu}						\and Justin~Romberg\\\texttt{jrom@ece.gatech.edu}}  

\date{\today}
\global\long\def\mb#1{\hm{#1}}
\global\long\def\mbb#1{\mathbb{#1}}
\global\long\def\mc#1{\mathcal{#1}}

\global\long\def\mr#1{\mathrm{#1}}
\global\long\def\msf#1{\mathsf{#1}}
\global\long\def\mfk#1{\mathfrak{#1}}
\global\long\def\E{\mbb E}
\global\long\def\P{\mbb P}

\global\long\def\T{{\scalebox{0.55}{\ensuremath{\msf T}}}}

\global\long\def\sgn{\ensuremath{\,\mr{sgn}}}

\global\long\def\defeq{\stackrel{\textup{\tiny def}}{=}}
\global\long\def\bbone{\mbb 1}

\DeclareMathOperator*{\argmax}{\text{argmax}}

\DeclareMathOperator*{\maximize}{\text{maximize}}

\begin{document}

\maketitle
\begin{abstract}
We consider the question of estimating a solution to a system of equations that involve convex nonlinearities, a problem that is common in machine learning and signal processing.  Because of these nonlinearities, conventional estimators based on empirical risk minimization generally involve solving a non-convex optimization program.  We propose \emph{anchored regression}, a new approach based on convex programming that amounts to maximizing a linear functional (perhaps augmented by a regularizer) over a convex set.   The proposed convex program is formulated in the natural space of the problem, and avoids the introduction of auxiliary variables, making it computationally favorable.  Working in the native space also provides great flexibility as structural priors (e.g., sparsity) can be seamlessly incorporated. 

For our analysis, we model the equations as being drawn from a fixed set according to a probability law.  Our main results  provide guarantees on the accuracy of the estimator in terms of the number of equations we are solving, the amount of noise present, a measure of statistical complexity of the random equations, and the geometry of the regularizer at the true solution.  We also provide recipes for constructing the anchor vector (that determines the linear functional to maximize) directly from the observed data.
\end{abstract}


\section{Introduction}
\label{sec:intro}

We consider the problem of (approximately) solving a system of nonlinear
equations with convex nonlinearities. In particular, we observe 
\begin{equation}
\begin{aligned}y_{1} & =f_{1}\left(\mb x_{\star}\right)+\xi_{1}\\
y_{2} & =f_{2}\left(\mb x_{\star}\right)+\xi_{2}\\
 & \vdots\\
y_{M} & =f_{M}\left(\mb x_{\star}\right)+\xi_{M},
\end{aligned}
\label{eq:convex-equations}
\end{equation}
where $\mb x_{\star}\in\mbb R^{N}$ contains the ground-truth parameters to be estimated, the functions $f_{1},\,f_{2},\,\ldots,\,f_{M}$ are convex and known, 
and the $\xi_{1},\,\xi_{2},\,\ldots,\,\xi_{M}$ are additive
noise terms. For simplicity, the functions $f_m$ are also assumed to be differentiable throughout the paper.
%
%
Given the observations \eqref{eq:convex-equations}, we propose as an estimator for
$\mb x_{\star}$ the solution of a convex program that balances consistency with the observations
against structure induced by a (convex) regularization term. 
This convex formulation means that if the functions $f_m$ and their first few derivatives can be computed efficiently, the proposed estimator is computationally tractable.

Our main results give error bounds on the quality of the produced estimate, specified by the Euclidean distance to $\mb x_{\star}$, under a model where the functions $f_{m}$ are drawn at random from a set $\mc F$ according to some probability law. 
The estimation error, and the sufficient number of equations $M$
needed to achieve it, depend on the \emph{Rademacher complexity} of a set $\mc A$ with respect to the induced probability law on the gradients $\nabla f_m$ --- $\mc A$ is essentially the set of all ascent directions of the functional being maximized at $\mb x_{\star}$, and depends on the geometry of the regularizer.

One of these results, Corollary~\ref{cor:unstruct}, shows that if the $f_m$ are ``generic'' convex functions (in that their gradients are sufficiently diverse in $\mathbb{R}^N$), then $\mb{x}_\star$ can be recovered from $M\sim N$ observations $y_m=f_m(\mb{x}_\star)$.

\subsection{Motivating examples}
\label{ssec:examples}

To illustrate the broad applicability of the observation model \eqref{eq:convex-equations}, we provide
a few motivating examples. 
\begin{exmp}[Convex function of linear predictor] Perhaps the simplest example is the case that
\begin{align}
f_m(\mb{x})&= \phi(\mb{a}_m^{\T} \mb{x}) & m=1,2,\dotsc,M \label{eq:linear-pred}\,,
\end{align}
for a given convex function $\phi :\mbb{R}\to\mbb{R}$, and data samples $\mb{a}_1,\mb{a}_2,\dotsc,\mb{a}_M \in \mbb{R}^N$. This form of observation appears in many instances of \emph{generalized linear models} (GLM)
\citep{McCullagh-Generalized-1989} with $\phi$ being
the \emph{mean function }(or the \emph{inverse link function}) of the GLM. In some cases, the mean function may not be convex itself, but it has a property (e.g., it is concave,
log-convex, or log-concave) that allows us to easily convert it to a convex function. In such cases, $\phi$ would be the appropriate transformation of the mean function.

Equations of the form \eqref{eq:linear-pred} also appear in the \emph{phase retrieval} problem from computational imaging. The goal in phase retrieval is to estimate
an signal/image (up to a global phase) from intensity of some linear
observations. More precisely, given (random) \emph{measurement vectors}
$\mb a_{1},\mb a_{2},\dotsc,\mb a_{M}\in\mbb C^{N}$ the goal is
to estimate $\mb x_{\star}\in\mbb C^{N}$ from noisy intensity measurements
$y_{m}=\left|\mb a_{m}^{*}\mb x_{\star}\right|^{2}+\xi_{m}$ for $m=1,2,\dotsc,M$.
Aside from minor technicalities for treating complex-valued vectors, clearly the observation model in phase retrieval is a special case of \eqref{eq:linear-pred}
with $\phi(z)=\left|z\right|^{2}$
being the nonlinearity.
\end{exmp}
\begin{exmp}[A simple neural network] The second example is related to learning with a\emph{
single hidden-layer neural network} \cite{Haykin-Neural-2009,Shalev_Shwartz-Understanding-2014}.
Given prescribed
weights $w_{1},w_{2},\dotsc,w_{K}\ge0$ for the hidden layer of a single hidden-layer neural network, and an activation function $\phi:\mbb R\to\mbb R$, the
output  of the network can be expressed as\looseness=-1
\begin{equation}
	\label{eq:exNN}
	f_{m}\left(\mb x\right)=\sum_{k=1}^{K}w_{k}\phi\left(\mb a_{m}^{\T}\mb x_{k}\right)\,,
\end{equation}
where $\mb x=\left[\begin{array}{cccc}
\mb x_{1}^{\T} & \mb x_{2}^{\T} & \dotsm & \mb x_{K}^{\T}\end{array}\right]^{\T}$ denotes the input layer's weight parameters and $\mb a_{m}$ denotes
the $m$th data sample. For convex activation functions such as the popular $\phi(u) = \max(u,0)$  and $\phi(u)=\log_{2}\left(1+e^{u}\right)$,
the functions $f_{m}\left(\mb x\right)$ are also convex and the model
assumed in \eqref{eq:convex-equations} applies.

The assumption of the knowledge of the output weights can be dropped, if the activation function is also positive homogeneous. In fact, each non-negative weight $w_k$ of the output layer can be absorbed into the corresponding input layer weight $\mb{x}_k$ because they interact only through $w_k \phi(\mb{a}_m^\T \mb{x}_k)=\phi\left(\mb{a}_m^\T(w_k\mb{x}_k)\right)$. Therefore,  this case reduces to the previous example with $w_1=w_2=\cdots=w_K=1$. We may need to adapt the regularizer on the $w_k$ and $\mb{x}_k$ to account for this combination of variables.
\end{exmp}

\begin{exmp}[Generalized neural nets and dimensionality reduction]
As an extension of the example above, we can consider the problem of recovering a $N\times K$ ($K<N$)  matrix $\mb{X}$ from observations of the form 
\begin{equation}
	\label{eq:exgenNN}
	f_{m}\left(\mb{X}\right)=\sum_{t=1}^T w_t\mu_t\left(\mb{X}^\T\mb a_{m}\right),
\end{equation}
for fixed functions $\mu_1,\mu_2,\dotsc,\mu_T:\mbb R^{K}\to\mbb R$ and weights $w_1,w_2,\dotsc,w_T$.  We might interpret problems of this form in two different ways.  First, we can view \eqref{eq:exgenNN} as a generalization of the single-layer neural net in \eqref{eq:exNN}, where now the nonlinearties $\mu_t$ are acting {\em jointly}, rather than pointwise, across the output of the linear layer.  Another interpretation is that we are trying to find a regression function that works on feature vectors in a reduced dimension space, replacing $\mb{a}_m\in\mbb R^N$ with $\mb{X}^\T\mb{a}_m\in\mbb{R}^K$, and we jointly learn how to choose this linear embedding and the weights $w_t$.


By naturally flattening $\mb{X}$ into a vector in $\mbb{R}^{NK}$, and assuming that the functions $\mu_t\left(\cdot\right)$ are convex with  known weights $w_t\ge 0$, our framework applies to this model. Similar to the previous example, we may also relax the knowledge of $w_t$'s assuming that $\mu_t$'s are positive homogeneous so that with $\mb{w}=\left[\begin{array}{cccc}
w_1 & w_2 & \dotsm  & w_T
\end{array}\right]^\T$ we can write 
\[f_m(\mb{w}, \mb{X}) = \sum_{t=1}^T \mu_t (w_t \mb{X}^\T \mb{a}_m)\, .\] Again we can flatten $\mb{X}$ to $\mb{x}\in \mbb{R}^{NK}$ and observe that the functions $f_m(\mb{w},\mb{X})$ only depend on the rank-one matrix $\mb{Z} = \mb{x}\mb{w}^\T$. Namely, we can write 
\[f_m(\mb{w},\mb{X}) = \sum_{t=1}^T \mu_t\left(\left(\mb{I}_{K\times K}\otimes \mb{a}_m^\T \right)\mb{z}_t\right)\, , \]
where  $\otimes$ denotes the Kronecker product and $\mb{z}_t$ is the $t$th column of $\mb{Z}$. 
As a function of $\mb{Z}$, the observation functions $f_m(\mb{Z})$ are convex and compatible with the observation model \eqref{eq:convex-equations}. The only remaining concern is inducing the rank-one structure of $\mb{Z}$ which can be done through \emph{nuclear~norm} regularization in the estimator.
\end{exmp}

Some of the above examples can be extended further to address composition of non-negative mixtures of positive homogeneous functions by lifting the unknown parameters to a higher order low-rank tensor. However, estimating low-rank tensors usually involves computationally prohibitive operations which renders these possible extensions less interesting from a practical point of view.  

\subsection{Anchored regression}

In this paper, we propose \emph{anchored regression} to estimate $\mb x_{\star}$
in the parametric model described by \eqref{eq:convex-equations}.
An \emph{anchor vector} $\mb a_{0}\in\mbb R^{N}$ is a unit vector
(i.e., $\left\lVert \mb a_{0}\right\rVert _{2}=1$) that obeys
\begin{align}
\left\langle \mb a_{0},\mb x_{\star}\right\rangle  & \ge\delta\left\lVert \mb x_{\star}\right\rVert _{2},\label{eq:anchor}
\end{align}
 for an absolute constant $\delta\in(0,1]$. In words, the anchor
vector has a non-vanishing correlation with a ground truth. Given
an anchor vector $\mb a_{0}$, our proposed estimator for \eqref{eq:convex-equations}
is the convex program 
\begin{equation}
\begin{aligned}\argmax_{\mb x}\  & \left\langle \mb a_{0},\mb x\right\rangle -\Omega\left(\mb x\right)\\
\mr{subject\ to\ } & R_{M}^{\,+}\left(\mb x\right)\le R_{M}^{\,+}\left(\mb x_{\star}\right)+\varepsilon\,,
\end{aligned}
\label{eq:anchored-regression}
\end{equation}
where $\Omega\left(\mb x\right)$ is a convex regularizer, $\varepsilon>0$
is a small constant, and $R_{M}^{\,+}\left(\cdot\right)$ is the empirical
one-sided error
\begin{align*}
R_{M}^{\,+}\left(\mb x\right) & \defeq\frac{1}{M}\sum_{m=1}^{M}\left(f_{m}\left(\mb x\right)-y_{m}\right)_{+}\,,
\quad \left(\cdot\right)_{+} = \max(\cdot,0),
\end{align*}
 which is also convex. 
Note that when there is no noise in the observations and we take $\varepsilon=0$, the constraint $R_{M}^{\,+}\left(\mb x\right)\leq 0$ is equivalent to $f_m(\mb x)\leq y_m$ for all $m$.  
Of course, when there is noise, the value of 
\[R_{M}^{+}\left(\mb x_{\star}\right)=\frac{1}{M}\sum_{m=1}^M(-\xi_m)_+\,,\]
is unknown in general, but depending on the noise model we may assume $R_{M}^{+}\left(\mb x_{\star}\right)$ can be estimated and the absolute error of such an estimate is captured by $\varepsilon$ in \eqref{eq:anchored-regression}.

The anchor vector $\mb{a}_0$ can be interpreted as a ``rough guess'' for the solution $\mb{x}_\star$.  This guess might come from some kind of a priori information about the solution, or it might be formed directly from the observations.  In Section \ref{sec:recipes} below, we describe some schemes by which an $\mb{a}_0$ obeying \eqref{eq:anchor} can be constructed in a data-driven manner.

\subsection{Advantages of anchored regression}

Solving the system of nonlinear equations \eqref{eq:convex-equations} is in general a hard problem.  In the particular case where the $f_m$ are multivariate polynomials,  a standard approach is to \emph{lift} the variables to a higher-dimensional space that linearizes the equations.  The key idea is that any homogeneous polynomial function of the vector $\mb{x}\in\mathbb{R}^N$ of degree $d$ can be expressed as a linear function of the tensor $\mb{X} = \mb{x}^{\otimes d}$.  In the lifted domain, then, solving \eqref{eq:convex-equations} amounts to finding a \emph{rank-one tensor} solution to a system of linear equations.  For quadratic polynomials ($d=2$), there is a natural convex relaxation to this problem using semi-definite programming; the effectiveness of this relaxation for solving generic quadratic equations was studied in \cite{candes14so}, and analysis for several types of structured equations that arise in engineering problems can be found in \cite{candes13ph,ahmed14bl,ling15se}.  For $d\ge3$, it is not clear how the lifted problem can be relaxed, as the required tensor computational primitives are computationally hard \cite{Hillar-Tensor-2013}.  Even in the $d=2$ case where the relaxation can be cast as an SDP, the computational cost in squaring the number of variables can be prohibitive for medium- to large-scale problems.

In contrast, our estimator works for observations that are general convex functions of the unknown variables.\footnote{Of course, we need to be able to evaluate the $f_m$ and some number of its derivatives to actually solve \eqref{eq:anchored-regression}.}  Our method works in the natural domain of the unknown variables, which is computationally efficient and provides us with the flexibility to incorporate prior structural information about the ground truth through regularization.

The anchor $\mb{a}_0$ also allows us to avoid one of the pitfalls of convexification.  It can be that there are multiple equally valid solutions to \eqref{eq:convex-equations}, especially since the $f_m$ are convex.  For instance, in the phase retrieval problem where $f_m(\mb{x}) = |\mb{a}_m^*\mb{x}|^2$ for a set of complex-valued vectors $\mb{a}_m$, a global change of phase to $\mb{x}_\star$ will not affect the measurements at all.  Minimizing a convex loss function that assigns an equal score to each of these equivalent solutions is bound to fail, as all of the points in the convex hull of the equivalence set will score equally well or better.  The lifting approach for quadratic equations gets around this by mapping every equivalent solution to the same point in the lifted space.  Our method also handles this in a straightforward way by introducing a bias towards the solutions best aligned with the anchor $\mb{a}_0$.

\subsection{Geometry of convex equations}

The analysis of  \eqref{eq:anchored-regression} for solving \eqref{eq:convex-equations} has a very clean geometric interpretation.  Suppose that the observations are noise-free, $y_m=f_m(\mb{x}_\star)$, and we solve \eqref{eq:anchored-regression} without the regularization term and with $\varepsilon=0$.  Since $R_M^+(\mb{x}_\star)=0$, this is equivalent to solving 
\begin{equation}
	\label{eq:noisefree}
	\begin{aligned}
	\maximize_{\mb{x}\in\mathbb{R}^N}\ &\langle\mb{x},\mb{a}_0\rangle & \\
	\text{subject to}\ &f_m(\mb{x})\leq y_m, & m=1,\ldots,M\,.
	\end{aligned}
\end{equation}
Since all of the constraints in the program above are active at $\mb{x}_\star$, the KKT conditions for this program tell us that $\mb{x}_\star$ is indeed the solution when
\[
	-\mb{a}_0 + \sum_{m=1}^M\lambda_m\nabla f_m(\mb{x}_\star) = \mb{0},
\]
for some $\lambda_1,\ldots,\lambda_M\geq 0$.  More succinctly, \eqref{eq:noisefree} is successful when
\[
	\mb{a}_0 \in \operatorname{cone}\left(\{\nabla f_m(\mb{x}_\star)\}_{m=1}^M\right).
\]
This is illustrated in Figure~\ref{fig:nonlingeom}.  If we knew the $\nabla f_m(\mb{x}_\star)$ in addition to the $f(\mb{x}_\star)$, then generating such a $\mb{a}_0$ would be straightforward.  But in general, we do not have knowledge of these gradients.

\begin{figure}[h]
	\begin{center}
		\begin{tabular}{ccc}
			\includegraphics[height=2.4in]{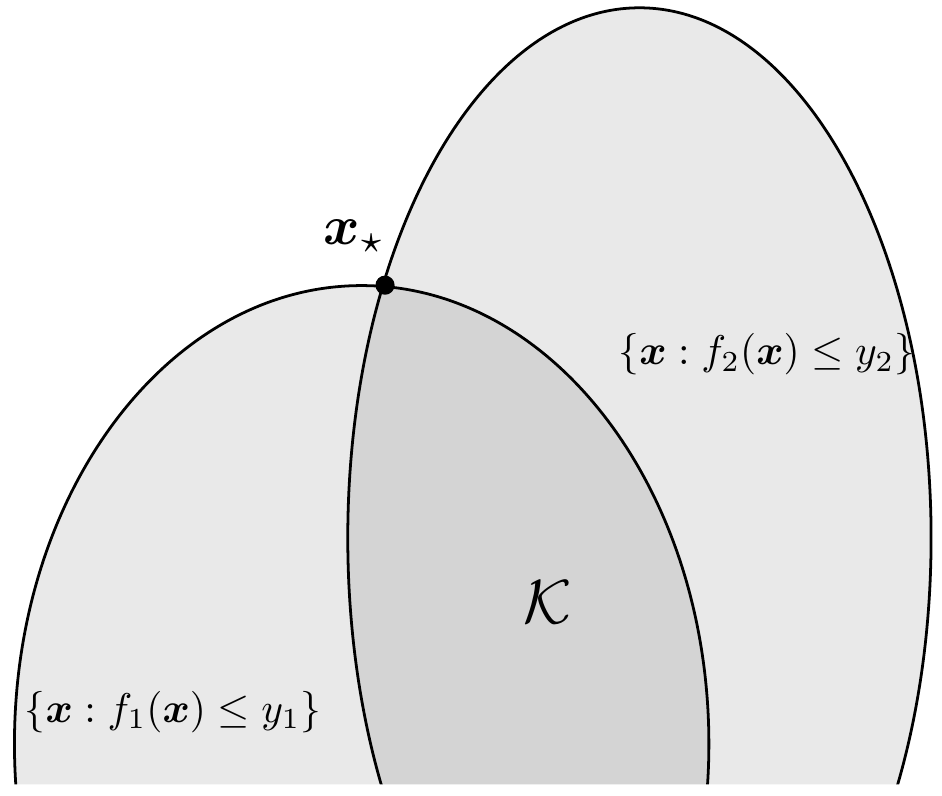} &
			\hspace{.5in} & 
			\includegraphics[height=2.25in]{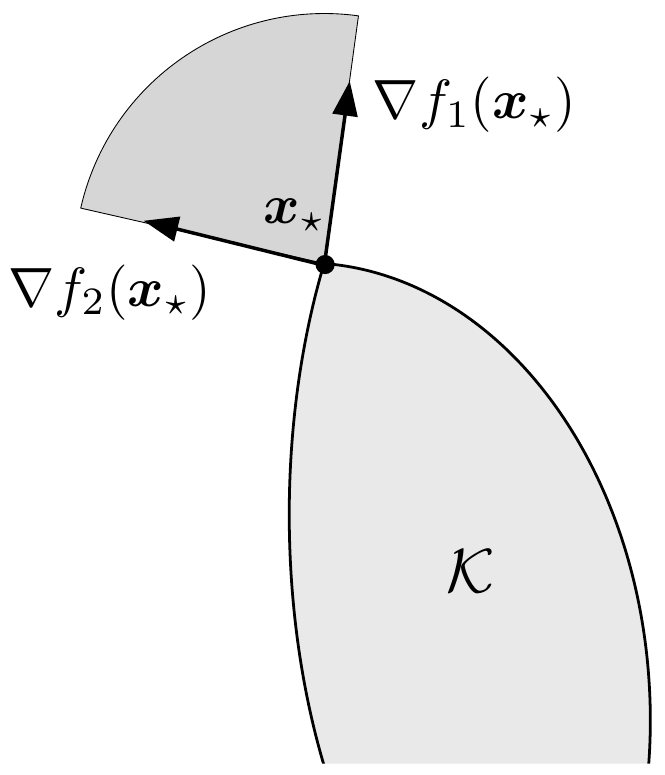} \\
			(a) & & (b) 
		\end{tabular}
	\end{center}
	\caption{\small\sl A simple set of two nonlinear equations, $y_1=f_1(\mb{x}_\star)$ and $y_2=f_2(\mb{x}_\star)$, in $\mathbb{R}^2$.  (a)  The unknown $\mb{x}_\star$ is an extreme point of $\mathcal{K} = \{\mb{x}~:~f_m(\mb{x})\leq y_m,~m=1,\ldots,M\}$.  (b) The program \eqref{eq:noisefree} will recover $\mb{x}_\star$ when $\mb{a}_0$ is in the cone generated by the two gradients.}
	\label{fig:nonlingeom}
\end{figure}

The main results in this paper say that it is enough to find a $\mb{a}_0$ that is roughly aligned with $\mb{x}_\star$.  That is, if the $f_m$ (and hence the $\nabla f_m(\mb{x}^\star)$) are generated at random, then with high probability
\[
	\langle\mb{x}_\star,\mb{a}_0\rangle\geq \delta \quad\Rightarrow\quad 
	\mb{a}_0\in\operatorname{cone}\left(\{\nabla f_m(\mb{x}_\star)\}_{m=1}^M\right),
\]  
provided that the number of equations $M$ is large enough.  In fact, we show that {\em every} roughly aligned vector will work.  Geometrically, this means that the cone of all valid anchor vectors
\[
	\mathcal{C}_\delta = \left\{\mb{z}~:~\langle\mb{z},\mb{x}_\star\rangle\geq\delta\|\mb{x}_\star\|_2\|\mb{z}\|_2\right\},
\]
is included in the cone generated by the gradients
\[
	\mathcal{C}_\delta\subseteq \operatorname{cone}\left(\{\nabla f_m(\mb{x}_\star)\}_{m=1}^M\right),
\]
again when $M$ is large enough.  As $\delta$ becomes larger (meaning that the anchor is more aligned with the true solution) the cone $\mathcal{C}_\delta$ shrinks, and the inclusion above may be satisfied with smaller $M$.

Adding a convex regularizer $\Omega(\mb{x})$ to the program makes the optimality conditions weaker.  If we again observe $y_m=f_m(\mb{x}_\star)$ and then solve
\[
	\begin{aligned}
	\maximize_{\mb{x}\in\mathbb{R}^N}\ & \langle\mb{a}_0,\mb{x}\rangle - \Omega(\mb{x}) & \\
	\text{subject to}\ & f_m(\mb{x})\leq y_m, &m=1,2,\dotsc,M\,,
	\end{aligned}
\]
then $\mb{x}_\star$ is the solution when
\[
	\mb{a}_0\in\operatorname{cone}\left(\{\nabla f_m(\mb{x}^\star)\}_{m=1}^M\right) + \partial\Omega(\mb{x}^\star),
\]
where $\partial\Omega(\mb{x}_\star)$ is the subdifferential of $\Omega$ at $\mb{x}_\star$,
\[
	\partial\Omega(\mb{x}) = \left\{\mb{g}~:~\Omega(\mb{x}+\mb{z})\geq\Omega(\mb{x}) + \mb{g}^\T\mb{z}~\text{for all}~\mb{z}\in\mathbb{R}^N\right\}.
\]

  Finding such a $\mb{a}_0$ is now easier than in the unregularized case, as the convex cone that needs to enfold it is larger.  The (subdifferential of the) regularization term is effectively working as an additional set of observations --- if the subgradients in $\partial\Omega(\mb{x}^\star)$ are not aligned with the $\nabla f_m(\mb{x}^\star)$, we may be able to get away with many fewer equations.

\subsection{Related work}

As mentioned in Section \ref{ssec:examples} our approach can be applied to most GLMs. Another relevant model is the
semiparametric \emph{single index model} (see  \cite{Ichimura-Semiparametric-1993}, for example)
where, again, the model assumes linear predictors, but the nonlinear
function $\mu$ is not known. Under some regularity assumption on $\mu$, in \citep{Plan-Generalized-2016,Plan-High_dimensional-2016} simple estimators based on convex programming are shown  to produce
accurate estimates (up to some scaling factor). Because $\mu$ is unknown, our framework
does not apply to the single index model. However, there are
interesting instances of \eqref{eq:convex-equations} (e.g., phase
retrieval) where the results of \citep{Plan-Generalized-2016,Plan-High_dimensional-2016} do not apply
as the assumed regularity conditions lead to trivial bounds.

The most relevant results to our work are the recent methods proposed for phase retrieval in \citep{Bahmani-Phase-2016} and independently
in \citep{Goldstein-PhaseMax-2016} that exploit anchor vectors. As mentioned above, the phase retrieval problem can be described by the
model \eqref{eq:convex-equations} with $f_{m}\left(\mb x\right)=\left|\mb a_{m}^{*}\mb x\right|^{2}$
for some random measurement vector $\mb a_{m}$. It is shown in \citep{Bahmani-Phase-2016,Goldstein-PhaseMax-2016}
that, using an anchor, the ground truth $\mb x_{\star}$ can be estimated accurately and with optimal sample complexity 
through a convex program analogous to \eqref{eq:anchored-regression}. The analyses in \citep{Goldstein-PhaseMax-2016}
and \citep{Bahmani-Phase-2016} differ in that the former assumes
the anchor is independent of the measurements whereas the latter does
not make this assumption. Alternative proofs and variations of this
phase retrieval method also appeared later in \citep{Hand-Elementary-2016,Hand-Compressed-2016,Hand-Corruption-2016}.

Our results for systems of equations with structured solutions are related to work on nonlinear compressed sensing \cite{bahmani13gr,blumensath13co,beck13sp,shechtman14ge,ehler14qu}.  In contrast to these works, our treatment takes place in a more general setting.  Theorem~\ref{thm:main-theorem} and Corollary~\ref{cor:main} give bounds on the number of equations needed to estimate $\mb x_{\star}$ to a certain accuracy that are based on general properties of the regularizer at the solution and the gradients of the functions $f_m$ --- they are not restricted to sparse solutions to nonlinear systems with highly specialized structure.

Recently, gradient descent methods for two particular nonlinear problems of interest in signal processing and machine learning, phase retrieval as mentioned above and ReLU regression, have been analyzed in the literature \cite{Candes_Phase_2014,Soltanolkotabi-Learning-2017}.  The specialization of our anchored regression technique to the phase retrieval problem is thoroughly detailed in \cite{Bahmani-Phase-2016,bahmani17fl}, and we will discuss its application to the ReLU regression problem in Section~\ref{ssec:RadCom} below.

\section{Main result}
\label{sec:main}

To show that \eqref{eq:anchored-regression}
produces an accurate estimate of $\mb x_{\star}$, it suffices to
show that the set of ascent directions,
\begin{align*}
\mc A & =\left\{ \mb h\,:\,\left\langle \mb a_{0}-\mb g,\mb h\right\rangle \ge0,\mr{\ for\ all\ }\mb g\in\partial\Omega\left(\mb x_{\star}\right)\right\} \,,
\end{align*}
 does not contain any vector with large $\ell_{2}$ norm that is consistent
with the constraint. Namely, if $\mb{h}\in\mc{A}$ has a large $\ell_2$ norm, then $R_M^+(\mb{x}_\star+\mb{h})>R^+_M(\mb{x}_\star) +\varepsilon$.

As explained further in Section \ref{sec:recipes}, we will be interested in anchor vectors 
$\mb a_{0}$ constructed from the observations $y_m$ (and knowledge of the $f_m$).  However, the dependence of the set $\mc A$ on the observations would complicate the probabilistic analysis.  We avoid this dependence by an expansion of the set $\mc A$ that allows us to decouple it from $\mb a_{0}$.

Let  $\mb h_{\perp}$ and ${\mb{a}_0}_\perp$ denote the projection of $\mb h$ and $\mb{a}_0$ onto the hyperplane orthogonal to $\mb x_{\star}$, respectively. Using the assumed property \eqref{eq:anchor} of the unit-norm anchor vector $\mb{a}_0$,  we can write 
\begin{align*}
\left\langle \mb a_{0},\mb h\right\rangle  & =\langle{\mb a_{0}}_\perp,\mb h\rangle+\langle\frac{\mb x_{\star}\mb{x}_\star^\T\mb{a}_0}{\left\lVert \mb x_{\star}\right\rVert^2 _{2}},\mb h\rangle\\
 & \le\left\lVert {\mb a_{0}}_\perp\right\rVert _{2}\left\lVert \mb h_\perp\right\rVert _{2}+\left(\frac{\mb{x}_\star^\T\mb{a}_0}{\left\lVert\mb{x}_\star\right\rVert_2}-\delta\right)\langle\frac{ \mb{x}_\star}{\left\lVert\mb{x}_\star\right\rVert_2},\mb h\rangle + \delta\langle\frac{\mb x_{\star}}{\left\lVert \mb x_{\star}\right\rVert _{2}},\mb h\rangle\\
 & \le\sqrt{\left\lVert {\mb a_{0}}_\perp\right\rVert _2^2+\left(\frac{\mb{x}_\star^\T\mb{a}_0}{\left\lVert\mb{x}_\star\right\rVert_2}-\delta\right)^2}\sqrt{\left\lVert \mb h_\perp\right\rVert^2_2+\langle\frac{ \mb{x}_\star}{\left\lVert\mb{x}_\star\right\rVert_2},\mb h\rangle^2}+\delta\langle\frac{\mb x_{\star}}{\left\lVert \mb x_{\star}\right\rVert _{2}},\mb h\rangle\\
 & \le \sqrt{1-\delta^2}\left\lVert\mb{h}\right\rVert_2 + \delta \langle\frac{\mb x_{\star}}{\left\lVert \mb x_{\star}\right\rVert _{2}},\mb h\rangle\,,
\end{align*}
The second line follows from the Cauchy-Schwarz inequality, and by adding and subtracting $\langle\delta \mb{x}_\star/\left\Vert\mb{x}_\star\right\Vert_2,\mb{h} \rangle$. Applying the Cauchy-Schwarz inequality on the first two terms of second line, yields the third line. The fourth line then follows by observing the orthogonal decompositions of $\mb{a}_0$ and $\mb{h}$ with respect to $\mb{x}_\star$.
The obtained inequality implies the inclusion
\begin{align}
\mc A\subset\mc A_{\delta} & \defeq\left\{ \mb h\,:\,\sqrt{1-\delta^2}\left\lVert \mb h\right\rVert _{2}+\langle\frac{\delta\mb x_{\star}}{\left\lVert \mb x_{\star}\right\rVert _{2}}-\mb g,\mb h\rangle\ge0,\mr{\ for\ all\ }\mb g\in\partial\Omega\left(\mb x_{\star}\right)\right\} \,.\label{eq:A_delta}
\end{align}
 It suffices to show that $\mc A_{\delta}$ does not contain any point with (relatively)
large $\ell_{2}$ norm consistent with the constraints as mentioned above. 

As it becomes clear in the sequel, it is critical that $\mc{A}_\delta$ excludes a sufficiently large subset of $\mbb{R}^N$. Therefore, we may require $\delta$ to be bounded away from zero. For example, in unstructured phase retrieval, where $f_m(\mb{x})=(\mb{a}_m^\T\mb{x})^2$ and $\Omega(\mb{x}) = 0$, the set $\mc{A}_\delta$ should not contain $-\mb{x}_\star$. Therefore, in this case we must have $\sqrt{1-\delta^2} < \delta$ or equivalently $\delta >1/\sqrt{2}$. Throughout, we implicitly assume that such required lower bounds on $\delta$ hold. Furthermore, instead of \eqref{eq:A_delta} we could have used the tighter approximation
\begin{align*}
\mc{A} & \subseteq \left\{\mb{h}\,:\,\langle\frac{\mb x_{\star}}{\left\lVert \mb x_{\star}\right\rVert _{2}},\mb{h}\rangle<\delta \left\lVert \mb h\right\rVert _{2}\ \text{and}\ \sqrt{1-\delta^{2}}\left\lVert \mb h_{\perp}\right\rVert _{2}+\langle\frac{\delta\mb x_{\star}}{\left\lVert \mb x_{\star}\right\rVert _{2}}-\mb{g},\mb h\rangle \ge 0\ \text{for all}\ \mb g\in\partial\Omega\left(\mb x_{\star}\right)\right\}\\
& \hspace{3ex}\bigcup \left\{\mb{h}\,:\, \langle\frac{\mb x_{\star}}{\left\lVert \mb x_{\star}\right\rVert _{2}},\mb{h}\rangle\ge\delta \left\lVert \mb h\right\rVert _{2}\ \text{and}\  \left\lVert \mb h\right\rVert _{2}-\langle\mb{g},\mb{h}\rangle \ge 0\ \text{for all}\ \mb g\in\partial\Omega\left(\mb x_{\star}\right)\right\}\,,
\end{align*}
 where $\mb{h}_\perp$ denotes part of $\mb{h}$ that is orthogonal to $\mb{x}_\star$. While using this approximation improves the dependence of our result on $\delta$, we prefer \eqref{eq:A_delta}
merely for simpler notation and derivations.

  Our main theorem below provides a sample complexity
for establishing the desired sufficient condition and thus accuracy
of \eqref{eq:anchored-regression} in terms of a Rademacher complexity and a probability bound for  $\nabla f(\mb{x}_\star)$ being in certain half-spaces. Let us pause here to describe these two quantities first. 

For a set $\mc H\subset\mbb R^{N}$ define the Rademacher complexity\footnote{Unlike conventional definition of Rademacher complexities, we use a normalization by square root of the number of samples.} with respect to $\nabla f_m(\mb{x}_\star)$'s as
\begin{align}
\mfk C_{M}\left(\mc H\right)\defeq & \ \E\sup_{\mb h\in\mc H}\,\frac{1}{\sqrt{M}}\sum_{m=1}^{M}\epsilon_{m}\langle\nabla f_{m}\left(\mb x_{\star}\right),\frac{\mb h}{\left\lVert \mb h\right\rVert _{2}}\rangle\,,\label{eq:complexity}
\end{align}
where $\epsilon_{1},\epsilon_{2},\dotsc,\epsilon_{M}$ are i.i.d. Rademacher random variables independent of everything else. If $\mc{H}$ is non-negative homogeneous (e.g., it is a convex cone), then $\mfk{C}_M(\mc{H})$ is a measure of ``wideness'' of $\mc{H}$ near the origin; the notion of wideness here  depends on the law of $\nabla f(\mb{x}_\star)$. For example, in the case that $\nabla f(\mb{x}_\star)$ is a standard Gaussian random vector, the vector $M^{-1/2}\sum_{m=1}^M \epsilon_m \nabla f_m (\mb{x}_\star)\rangle$ is also distributed like a standard Gaussian vector and $\mfk{C}_M(\mc{H})$ reduces to the \emph{Gaussian width} of $\mc{H}$ which is defined as
\begin{align*}
	\gamma(\mc{H}) &\defeq \E \sup_{\mb{h}\in\mc{H}}\,\langle \mb{g},\frac{\mb{h}}{\left\lVert\mb{h}\right\rVert_2}\rangle,
	\qquad \mb{g}\sim\mr{Normal}(\mb{0},\mb{I})\,.
\end{align*}
The reduction of the Rademacher complexity $\mfk{C}_M(\mc{H})$ to the Gaussian width $\gamma(\mc{H})$ is also possible for $\nabla f(\mb{x}_\star)$ that is not Gaussian but has a sufficiently regular law: for any fixed $\mb{h}$, assuming bounded moments of sufficiently high order the random quantity $M^{-1/2}\sum_{m=1}^M\epsilon_m\langle \nabla f_m(\mb{x}_\star),\mb{h}\rangle$ can be approximated by a Gaussian using Berry-Ess\'{e}en theorem; we only need to make this approximation uniform over the entire $\mc{H}$. While the Gaussian width of $\mc{H}$ can provide sharp approximations for $\mfk{C}_M(\mc{H})$, it is generally a difficult quantity to compute. In Section \ref{ssec:RadCom} we use different techniques to bound the Rademacher complexity in the special cases of unstructured and sparse regression problems.

The second quantity that affects the sample complexity of our method is a probability lower bound defined for $\mc{H}$ and  a positive parameter $\tau$ as
\begin{align}
p_{\tau}\left(\mc H\right) & \defeq \inf_{\mb h\in\mc H}\,\P\left(\langle\nabla f\left(\mb x_{\star}\right),\mb h\rangle\ge\tau\left\lVert \mb h\right\rVert _{2}\right)\,.\label{eq:small-ball-probability}
\end{align}
Intuitively, $p_\tau (\mc{H})$ measures how ``well-spread''  the random vector $\nabla f(\mb{x}_\star)$ is in the space. A smaller value of $p_\tau(\mc{H})$ indicates that realizations of $\nabla f(\mb{x}_\star)$ are often confined to some half-space whose normal vector belongs to $\mc{H}$. 

We now state our main theorem in terms of the quantities $\mfk{C}_M(\mc{A}_\delta)$ and  $p_\tau(\mc{A}_\delta)$.  In Sections~\ref{ssec:ProbBound} and \ref{ssec:RadCom} below, we show how these quantities can be bounded in a way that shows their dependence on the probability law for $\nabla f(\mb{x}_\star)$ more clearly.

\begin{thm}
\label{thm:main-theorem}Let $\mc A_{\delta}$ be defined as in \eqref{eq:A_delta} for which $\mfk{C}_M(\mc{A}_\delta)$ and $p_\tau(\mc{A}_\delta)$ can be determined using \eqref{eq:complexity}  and \eqref{eq:small-ball-probability}, respectively. For any  $t>0$, if 
\begin{align}
M & \ge 4\left(\frac{2\mfk C_{M}\left(\mc A_{\delta}\right)+t \tau}{\tau p_{\tau}\left(\mc A_{\delta}\right)}\right)^{2}\,,\label{eq:sample-complexity}
\end{align}
 then with probability $\ge 1-\exp(-2t^2)$ any solution
$\widehat{\mb x}$ of \eqref{eq:anchored-regression} obeys 
\begin{align*}
\left\lVert \widehat{\mb x}-\mb x_{\star}\right\rVert _{2} & \le\frac{2}{\tau p_{\tau}\left(\mc A_{\delta}\right)} \left(\frac{1}{M}\sum_{m=1}^{M}\left|\xi_{m}\right|+\varepsilon\right)\,.
\end{align*}
\end{thm}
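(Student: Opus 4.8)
The plan is to bound $\mb{h}\defeq\widehat{\mb{x}}-\mb{x}_\star$ in two moves: first use the \emph{optimality} of $\widehat{\mb x}$ to confine $\mb h$ to the cone $\mc A_\delta$, then use the \emph{feasibility} of $\widehat{\mb x}$ together with a uniform lower bound on an empirical process over $\mc A_\delta$ to force $\left\lVert\mb h\right\rVert_2$ to be small. For the first move, since $\mb x_\star$ is feasible for \eqref{eq:anchored-regression}, optimality gives $\langle\mb a_0,\widehat{\mb x}\rangle-\Omega(\widehat{\mb x})\ge\langle\mb a_0,\mb x_\star\rangle-\Omega(\mb x_\star)$, so that $\langle\mb a_0,\mb h\rangle\ge\Omega(\mb x_\star+\mb h)-\Omega(\mb x_\star)\ge\langle\mb g,\mb h\rangle$ for every $\mb g\in\partial\Omega(\mb x_\star)$ by convexity of $\Omega$. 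Hence $\mb h\in\mc A$, and by the inclusion $\mc A\subseteq\mc A_\delta$ recorded in \eqref{eq:A_delta} we get $\mb h\in\mc A_\delta$.

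For the second move I would turn the feasibility constraint into an inequality about gradients. Writing $y_m=f_m(\mb x_\star)+\xi_m$ and using convexity of $f_m$ together with the fact that $(\cdot)_+$ is nondecreasing, each summand obeys $(f_m(\mb x_\star+\mb h)-y_m)_+\ge(\langle\nabla f_m(\mb x_\star),\mb h\rangle-\xi_m)_+$, while $(f_m(\mb x_\star)-y_m)_+=(-\xi_m)_+$. Using $(a-c)_+\ge(a)_+-c$ for $c\ge0$ and $(\xi_m)_+ +(-\xi_m)_+=\left\lvert\xi_m\right\rvert$, the constraint $R_M^{\,+}(\widehat{\mb x})\le R_M^{\,+}(\mb x_\star)+\varepsilon$ collapses to
\[ \frac{1}{M}\sum_{m=1}^{M}\left(\langle\nabla f_m(\mb x_\star),\mb h\rangle\right)_+\le\frac{1}{M}\sum_{m=1}^{M}\left\lvert\xi_m\right\rvert+\varepsilon. \]
Because $\mc A_\delta$ is a cone, factoring out $\left\lVert\mb h\right\rVert_2$ makes the left side equal to $\left\lVert\mb h\right\rVert_2$ times the process evaluated at the unit vector $\mb h/\left\lVert\mb h\right\rVert_2\in\mc A_\delta$, so everything reduces to the uniform lower bound
\[ \inf_{\mb h\in\mc A_\delta}\frac{1}{M}\sum_{m=1}^{M}\left(\langle\nabla f_m(\mb x_\star),\tfrac{\mb h}{\left\lVert\mb h\right\rVert_2}\rangle\right)_+\ge\frac{\tau\,p_\tau(\mc A_\delta)}{2}. \]

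This last bound is the heart of the argument, and I expect it to be the main obstacle, because $(\cdot)_+$ is unbounded so neither a bounded-difference concentration nor a clean symmetrization applies directly. I would fix this by truncation, replacing $(z)_+$ with $\widetilde\phi(z)\defeq\min((z)_+,\tau)$, which is $1$-Lipschitz, valued in $[0,\tau]$, and satisfies $\widetilde\phi(0)=0$ and $\widetilde\phi(z)\ge\tau\,\bbone[z\ge\tau]$; since truncation only decreases the process it suffices to lower bound $\frac1M\sum_m\widetilde\phi(\langle\nabla f_m(\mb x_\star),\mb v\rangle)$ uniformly over unit $\mb v\in\mc A_\delta$. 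Its pointwise mean is at least $\tau\,\P(\langle\nabla f(\mb x_\star),\mb v\rangle\ge\tau)\ge\tau\,p_\tau(\mc A_\delta)$ by \eqref{eq:small-ball-probability}. Letting $Z$ denote the largest downward deviation of this empirical average from its mean over unit $\mb v\in\mc A_\delta$, McDiarmid's inequality with per-coordinate constant $\tau/M$ gives $Z\le\E Z+t\tau/\sqrt M$ with probability $\ge1-\exp(-2t^2)$ — this is exactly where the claimed probability and the $t\tau$ term originate. Standard symmetrization followed by the Ledoux--Talagrand contraction (valid since $\widetilde\phi$ is $1$-Lipschitz with $\widetilde\phi(0)=0$) bounds $\E Z\le 2\mfk C_M(\mc A_\delta)/\sqrt M$, so $Z\le(2\mfk C_M(\mc A_\delta)+t\tau)/\sqrt M$; the hypothesis \eqref{eq:sample-complexity} is precisely the condition making this at most $\tfrac12\tau\,p_\tau(\mc A_\delta)$, keeping the process above $\tfrac12\tau\,p_\tau(\mc A_\delta)$.

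Combining the two moves finishes the proof: on this event, dividing the feasibility inequality by the uniform lower bound yields
\[ \left\lVert\mb h\right\rVert_2\le\frac{\tfrac1M\sum_{m=1}^{M}\left\lvert\xi_m\right\rvert+\varepsilon}{\tfrac12\tau\,p_\tau(\mc A_\delta)}=\frac{2}{\tau\,p_\tau(\mc A_\delta)}\left(\frac1M\sum_{m=1}^{M}\left\lvert\xi_m\right\rvert+\varepsilon\right), \]
which is the stated bound (the degenerate case $\mb h=\mb 0$ being trivial). Beyond the truncation, the only delicate points are the bookkeeping that aligns the factor $2$, the $t\tau$ term, and the probability $1-\exp(-2t^2)$ with \eqref{eq:sample-complexity}.
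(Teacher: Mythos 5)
Your proposal is correct and follows essentially the same route as the paper's proof: your truncation $\widetilde\phi(z)=\min\left((z)_+,\tau\right)$ is exactly the paper's contraction $\psi_\tau(z)=(z)_+-(z-\tau)_+$, and your small-ball lower bound on the mean, McDiarmid bounded-difference step with constant $\tau/M$, and symmetrization-plus-contraction bound $\E Z\le 2\mfk C_M(\mc A_\delta)/\sqrt M$ coincide with the paper's argument step for step. The only difference is bookkeeping: you make explicit the (welcome) observation, left implicit in the paper's proof and discussed only in Section~\ref{sec:main}, that optimality of $\widehat{\mb x}$ together with feasibility of $\mb x_\star$ places $\mb h=\widehat{\mb x}-\mb x_\star$ in $\mc A\subseteq\mc A_\delta$, and you collapse the feasibility constraint into a bound on $\frac{1}{M}\sum_m(\langle\nabla f_m(\mb x_\star),\mb h\rangle)_+$ before invoking the uniform lower bound, rather than rearranging afterward as the paper does.
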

The factor $4$ in \eqref{eq:sample-complexity} can be made arbitrarily close to $1$ at the cost of increasing the constant factor in the error bound.  Proof of Theorem \ref{thm:main-theorem}, provided below in Section \ref{sec:proof}, is based on
the idea of \emph{small-ball method} introduced in \citep{Koltchinskii-Bounding-2015,Mendelson-Learning-2014}
and further developed in \citep{Mendelson-Learning-2014b,Lecue-Regularization_I-2016,Lecue-Regularization_II_2016}.

\subsection{Simplifying $p_\tau(\mc{A}_\delta)$ and the dependence on $\tau$}
\label{ssec:ProbBound}

The sample complexity in the statement of Theorem \ref{thm:main-theorem} critically depends on the parameters $\tau$ and $p_\tau(\mc{A}_\delta)$.  Approximating $p_\tau(\mc{A}_\delta)$ is challenging in general and may require detailed calculations even for specific cases (see, for example, \cite[Lemmas 3 and 5]{Bahmani-Phase-2016} where this is done for the phase retrieval problem $f_m(\mb{x}) = |\langle\mb{x},\mb{a}_m\rangle|^2$ with Gaussian vectors $\mb{a}_m$).  In this section, we show how $p_\tau(\mc{A}_\delta)$ and $\tau$ can be approximated by expressions that capture the interplay between the probability distribution on the $f_m$ and the geometry of the set $\mc{A}_\delta$.  These may still be very difficult to calculate, but they will allow us to state more interpretable corollaries to our main theorem,

We start by observing that
\begin{align*}
	\P\left(\langle\nabla f(\mb{x}_\star),\mb{h}\rangle \ge \tau \left\lVert\mb{h}\right\rVert_2 \right) 
	& =\P\left(\left(\langle\nabla f(\mb{x}_\star),\mb{h}\rangle\right)_+ \ge \tau \left\lVert\mb{h}\right\rVert_2 \right)\,,
\end{align*}
and then define 
\begin{equation}
	\label{eq:taudefs}
	\tau(\mb{h}) \defeq \frac{\E \left(\langle\nabla f(\mb{x}_\star),\mb{h}\rangle\right)_+}{\left\lVert\mb{h}\right\rVert_2},
	\quad
	\tau(\mc{A}_\delta) \defeq \inf_{\mb{h}\in\mc{A}_\delta\backslash\{\mb{0}\}}~\tau(\mb{h}).
\end{equation}

Clearly, both of these quantities are non-negative. However, to ensure that $p_\tau(\mc{A}_\delta)>0$ for some $\tau>0$, it is necessary that $\tau(\mc{A}_\delta)$ is strictly positive.  The only way this does \textit{not} occur is when there is some $\mb{h}\in \mc{A}_\delta\backslash\{\mb{0}\}$ for which $\langle \nabla f(\mb{x}_\star),\mb{h}\rangle\le 0$ almost surely.  That is, with probability $1$, $\nabla f(\mb{x}_\star)$ lies in a half-space whose normal vector is in $\mc{A}_\delta$. Qualitatively, if the distribution for $\nabla f(\mb{x}_\star)$ is well-spread, then this situation does not occur, and $\tau(\mc{A}_\delta)$ is positive.

By the Paley-Zygmund inequality \cite{PaleyZygmund-Analytic-1932}\cite[Corollary 3.3.2]{delaPenaGine-Decoupling-1999}, 
\[
	\P\left(\left(\langle\nabla f(\mb{x}_\star),\mb{h}\rangle\right)_+ \ge \frac{1}{2}\,\tau(\mb{h})\|\mb{h}\|_2 \right)
	~\geq~
	\frac{\tau^2(\mb{h})\|\mb{h}\|_2^2}{4\E[(\langle\nabla f(\mb{x}_\star),\mb{h}\rangle)_+^2]}.
\]
Since $(\langle\nabla f(\mb{x}_\star),\mb{h}\rangle)_+^2\leq |\nabla f(\mb{x}_\star)^\T\mb{h}|^2=\mb{h}^\T\nabla f(\mb{x}_\star)\nabla f(\mb{x}_\star)^\T\mb{h}$, we have for $\tau=\tau(\mc{A}_\delta)/2$,
\begin{equation}
	\label{eq:ptaulower}
	p_\tau(\mc{A}_\delta) ~\geq~ \frac{\tau^2(\mc{A}_\delta)}{4\varsigma^2(\mc{A}_\delta)},
\end{equation}
where
\begin{align}
	\label{eq:DAdelta}
	\varsigma^2(\mc{A}_\delta) &\defeq 
	\sup_{\mb{h}\in\mc{A}_\delta\backslash\{\mb{0}\}} \frac{\mb{h}^\T\mb{\varSigma}_\star\mb{h}}{\|\mb{h}\|_2^2}\,,
	&
	\text{with}\ \mb{\varSigma}_\star &= \E\left[\nabla f(\mb{x}_\star)\nabla f(\mb{x}_\star)^\T\right]\,.
\end{align}
This gives us the following corollary to Theorem~\ref{thm:main-theorem}:
\begin{cor}\label{cor:main}
	Let $\mc{A}_\delta$ be defined as in \eqref{eq:A_delta}, and $\mfk{C}_M(\mc{A}_\delta),\tau(\mc{A}_\delta),\varsigma^2(\mc{A}_\delta)$ as in \eqref{eq:complexity}, \eqref{eq:taudefs}, and \eqref{eq:DAdelta}.  For any $t>0$, if
	\begin{equation}
		\label{eq:Mcorlower}
		M ~\geq~ \frac{64\,\varsigma^4(\mc{A}_\delta)}{\tau^4(\mc{A}_\delta)}\left(\frac{4\mfk{C}_M(\mc{A}_\delta)}{\tau(\mc{A}_\delta)} + t \right)^2,
	\end{equation}
	then with probability $\ge 1-\exp(-2t^2)$, any solution $\widehat{\mb{x}}$ of \eqref{eq:anchored-regression} obeys
	\begin{equation}
		\label{eq:corerror}
		\|\widehat{\mb{x}}-\mb{x}_\star\|_2 ~\leq~ \left(\frac{16\,\varsigma^2(\mc{A}_\delta)}{\tau^3(\mc{A}_\delta)}\right)\left(\frac{1}{M}\sum_{m=1}^M|\xi_m|+\varepsilon\right).
	\end{equation}
\end{cor}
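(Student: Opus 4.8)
The plan is to obtain Corollary \ref{cor:main} as a direct specialization of Theorem \ref{thm:main-theorem}, with no new probabilistic argument required: all the randomness, including the failure probability $\exp(-2t^2)$, is inherited verbatim from the theorem. The two ingredients I would combine are (i) the particular choice $\tau = \tau(\mc{A}_\delta)/2$ of the free scalar parameter $\tau$ appearing in the theorem, and (ii) the Paley--Zygmund lower bound \eqref{eq:ptaulower}, which for this choice of $\tau$ reads $p_\tau(\mc{A}_\delta) \ge \tau^2(\mc{A}_\delta)/(4\varsigma^2(\mc{A}_\delta))$. Everything else is algebraic bookkeeping, so I do not anticipate a genuine obstacle.

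First I would compute the single quantity that controls both conclusions of the theorem, namely the product $\tau\,p_\tau(\mc{A}_\delta)$. With $\tau = \tau(\mc{A}_\delta)/2$ and \eqref{eq:ptaulower}, this product is bounded below by $\tau^3(\mc{A}_\delta)/(8\varsigma^2(\mc{A}_\delta))$, equivalently
\[
\frac{1}{\tau\,p_\tau(\mc{A}_\delta)} \le \frac{8\,\varsigma^2(\mc{A}_\delta)}{\tau^3(\mc{A}_\delta)}.
\]
Substituting this into the error bound of Theorem \ref{thm:main-theorem}, the prefactor $2/(\tau\,p_\tau(\mc{A}_\delta))$ is at most $16\,\varsigma^2(\mc{A}_\delta)/\tau^3(\mc{A}_\delta)$, which is exactly the prefactor in \eqref{eq:corerror}; the noise term $\frac{1}{M}\sum_m|\xi_m|+\varepsilon$ is left untouched. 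This settles the error estimate.

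For the sample complexity I would substitute $\tau = \tau(\mc{A}_\delta)/2$ into the numerator of \eqref{eq:sample-complexity}, writing $2\mfk{C}_M(\mc{A}_\delta) + t\tau = \tfrac{\tau(\mc{A}_\delta)}{2}\bigl(4\mfk{C}_M(\mc{A}_\delta)/\tau(\mc{A}_\delta) + t\bigr)$, and use the lower bound on $\tau\,p_\tau(\mc{A}_\delta)$ in the denominator. After squaring, the factor $4$ in \eqref{eq:sample-complexity} together with the collected powers of $\varsigma^2(\mc{A}_\delta)$ and $\tau(\mc{A}_\delta)$ assemble into precisely $\frac{64\,\varsigma^4(\mc{A}_\delta)}{\tau^4(\mc{A}_\delta)}\bigl(4\mfk{C}_M(\mc{A}_\delta)/\tau(\mc{A}_\delta)+t\bigr)^2$, the right-hand side of \eqref{eq:Mcorlower}.

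The only point demanding a moment of care --- and the closest thing to an obstacle --- is the direction of the inequalities in this last step. Because $p_\tau(\mc{A}_\delta)$ sits in the denominator of the threshold \eqref{eq:sample-complexity}, the lower bound \eqref{eq:ptaulower} produces an \emph{upper} bound on that threshold. Hence any $M$ exceeding the larger, fully explicit right-hand side of \eqref{eq:Mcorlower} automatically exceeds the theorem's threshold, so Theorem \ref{thm:main-theorem} applies and its conclusion transfers. I would also be vigilant throughout to keep the scalar parameter $\tau$ distinct from the set functional $\tau(\mc{A}_\delta)$ of \eqref{eq:taudefs}, since the notation deliberately reuses the symbol.
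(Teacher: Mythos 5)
Your proposal is correct and follows exactly the paper's route: set $\tau=\tau(\mc{A}_\delta)/2$, invoke the Paley--Zygmund bound \eqref{eq:ptaulower} to get $\tau\,p_\tau(\mc{A}_\delta)\ge\tau^3(\mc{A}_\delta)/(8\varsigma^2(\mc{A}_\delta))$, and substitute into Theorem~\ref{thm:main-theorem}, noting that the lower bound on $p_\tau(\mc{A}_\delta)$ only inflates the sample-complexity threshold so that \eqref{eq:Mcorlower} implies \eqref{eq:sample-complexity}. Your attention to the direction of that last inequality, and to keeping the scalar $\tau$ distinct from $\tau(\mc{A}_\delta)$, matches the (unspelled-out) bookkeeping the paper intends.
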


A simple dimensional analysis can help us understand how the bound above scale qualitatively.  The entries in the gradient vector $\nabla f(\mb{x})$ represent quantities in ``units of $f$'' divided by ``units of $\mb{x}$''.  The constants $\varsigma(\mc{A}_\delta),\tau(\mc{A}_\delta),$ and $\mfk{C}_M(\mc{A}_\delta)$ all have the same units as $\nabla f(\mb{x})$.   Thus the lower bound on the number of equations $M$ in \eqref{eq:Mcorlower} is dimensionless, as we expect.  As the noise variables $\xi_m$ are also in ``units of $f$'', both sides of the inequality in \eqref{eq:corerror} are in ``units of $\mb{x}$''.

\subsection{Bounding the Rademacher complexity $\protect\mfk C_{M}\left(\protect\mc A_{\delta}\right)$}
\label{ssec:RadCom}

To make the result of Theorem \ref{thm:main-theorem} more explicit
we consider the special cases of unstructured regression (i.e., $\mb x_{\star}$
is arbitrary and $\Omega\left(\mb x\right)=0$) and sparse regression
(i.e., $\mb x_{\star}$ is sparse and $\Omega\left(\mb x\right)=\lambda\left\lVert \mb x\right\rVert _{1}$
for some $\lambda>0$). The specific choice of $\Omega\left(\mb x\right)$
helps to simplify the Rademacher complexity $\mfk C_{M}\left(\mc A_{\delta}\right)$.
Of course, $\mfk C_{M}\left(\mc A_{\delta}\right)$ would depend on
the law of $f_{1},f_{2},\dotsc,f_{M}$ as well.

\subsubsection{Unstructured regression}

In the first example, we approximate $\mfk C_{M}\left(\mc A_{\delta}\right)$
when no regularization is applied, i.e., $\Omega\left(\mb x\right)=0$.
In this case, \eqref{eq:A_delta} reduces to 
\begin{align*}
\mc A_{\delta} & =\left\{ \mb h\,:\,\sqrt{1-\delta^2}\left\lVert \mb h\right\rVert _{2}+\langle\frac{\delta\mb x_{\star}}{\left\lVert \mb x_{\star}\right\rVert _{2}},\mb h\rangle\ge0\right\} \,,
\end{align*}
which we approximate by the entire space (i.e., $\mc A_{\delta}\subseteq\mbb R^{N})$.
Because $\mfk C_{M}\left(\mc{\cdot}\right)$ is monotonic, we obtain
\begin{align*}
\mfk C_{M}\left(\mc A_{\delta}\right) & \le\mfk C_{M}\left(\mbb R^{N}\right)\\
 & =\ \E\sup_{\mb h}\,\frac{1}{\sqrt{M}}\sum_{m=1}^{M}\epsilon_{m}\langle\nabla f_{m}\left(\mb x_{\star}\right),\frac{\mb h}{\left\lVert \mb h\right\rVert _{2}}\rangle\\
 & =\ \E\left\lVert \frac{1}{\sqrt{M}}\sum_{m=1}^{M}\epsilon_{m}\nabla f_{m}\left(\mb x_{\star}\right)\right\rVert _{2}\,.
\end{align*}
Applying the Cauchy-Schwarz for the square-root function which is
concave, we deduce that 
\begin{align*}
\mfk C_{M}\left(\mc A_{\delta}\right) & \le\sqrt{\E\left\lVert \frac{1}{\sqrt{M}}\sum_{m=1}^{M}\epsilon_{m}\nabla f_{m}\left(\mb x_{\star}\right)\right\rVert _{2}^{2}}\\
 & =\sqrt{\frac{1}{M}\E\sum_{m=1}^{M}\left\lVert \nabla f_{m}\left(\mb x_{\star}\right)\right\rVert _{2}^{2}}\\
 & =\sqrt{\E\left\lVert \nabla f\left(\mb x_{\star}\right)\right\rVert _{2}^{2}}\,,
\end{align*}
where the second and third lines respectively hold because $\epsilon_{m}$s
are independent zero-mean random variables and $f_{m}$s are i.i.d.
copies of $f$.
With this bound and using the fact that $\varsigma^2(\mc{A}_\delta)\leq\|\mb{\varSigma}_\star\|$, Corollary \ref{cor:main} can be specialized to the following with all of the relevant definitions unchanged.
\begin{cor}
\label{cor:unstruct}
For any $t>0$, if 
\begin{align}
	\label{eq:Munstruct}
	M & \ge \frac{64\|\mb{\varSigma}_\star\|^2}{\tau^4(\mc{A}_\delta)}\left(\frac{4\sqrt{\operatorname{tr}(\mb{\varSigma}_\star)}}{\tau(\mc{A}_\delta)} + t\right)^{2}\,,
\end{align}
then with probability $\ge1-\exp(-2t^{2})$ any solution $\widehat{\mb x}$
of \eqref{eq:anchored-regression} obeys 
\begin{align*}
\left\lVert \widehat{\mb x}-\mb x_{\star}\right\rVert _{2} & \le \frac{16\left\Vert\mb{\varSigma}_\star\right\Vert}{\tau^3\left(\mc{A}_\delta\right)}\left(\frac{1}{M}\sum_{m=1}^{M}\left|\xi_{m}\right|+\varepsilon\right)\,.
\end{align*}
\end{cor}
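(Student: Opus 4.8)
The plan is to derive Corollary~\ref{cor:unstruct} as an immediate specialization of Corollary~\ref{cor:main}: I replace the two data-dependent quantities $\mfk{C}_M(\mc{A}_\delta)$ and $\varsigma^2(\mc{A}_\delta)$ appearing in the sample-complexity bound \eqref{eq:Mcorlower} and the error bound \eqref{eq:corerror} by the explicit surrogates $\sqrt{\operatorname{tr}(\mb{\varSigma}_\star)}$ and $\|\mb{\varSigma}_\star\|$, and then verify that each substitution preserves the corresponding inequality in the correct direction. The quantity $\tau(\mc{A}_\delta)$ is left untouched.

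First I would record the two bounds. The Rademacher-complexity bound is exactly the computation carried out just above the statement: monotonicity of $\mfk{C}_M(\cdot)$ and the inclusion $\mc{A}_\delta\subseteq\mbb R^{N}$ give $\mfk{C}_M(\mc{A}_\delta)\le\sqrt{\E\|\nabla f(\mb{x}_\star)\|_2^2}$, and the cyclic invariance of the trace identifies the right-hand side as $\sqrt{\operatorname{tr}(\mb{\varSigma}_\star)}$, since $\E\|\nabla f(\mb{x}_\star)\|_2^2=\E\operatorname{tr}\!\left(\nabla f(\mb{x}_\star)\nabla f(\mb{x}_\star)^\T\right)=\operatorname{tr}(\mb{\varSigma}_\star)$. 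For the second bound, I observe that $\varsigma^2(\mc{A}_\delta)$ in \eqref{eq:DAdelta} is a supremum of the Rayleigh quotient $\mb{h}^\T\mb{\varSigma}_\star\mb{h}/\|\mb{h}\|_2^2$ over the restricted set $\mc{A}_\delta\backslash\{\mb{0}\}$; enlarging the feasible set to all nonzero $\mb{h}$ can only increase the supremum, and the resulting value is the largest eigenvalue of the symmetric positive-semidefinite matrix $\mb{\varSigma}_\star$, namely $\|\mb{\varSigma}_\star\|$. Hence $\varsigma^2(\mc{A}_\delta)\le\|\mb{\varSigma}_\star\|$, and in particular $\varsigma^4(\mc{A}_\delta)\le\|\mb{\varSigma}_\star\|^2$.

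Next I would substitute and track monotonicity. The right-hand side of \eqref{eq:Mcorlower} is increasing in both $\varsigma(\mc{A}_\delta)$ and $\mfk{C}_M(\mc{A}_\delta)$ (the bracketed term is positive and increasing in $\mfk{C}_M$), so replacing these by their upper bounds only enlarges it. Consequently the right-hand side of \eqref{eq:Munstruct} dominates that of \eqref{eq:Mcorlower}, and the assumption \eqref{eq:Munstruct} is a sufficient condition that puts the hypothesis of Corollary~\ref{cor:main} in force. For the conclusion, the prefactor in \eqref{eq:corerror} is increasing in $\varsigma^2(\mc{A}_\delta)$, so upper-bounding $\varsigma^2(\mc{A}_\delta)$ by $\|\mb{\varSigma}_\star\|$ only weakens the estimate; thus the error bound of Corollary~\ref{cor:main} implies the stated inequality $\|\widehat{\mb{x}}-\mb{x}_\star\|_2\le\frac{16\|\mb{\varSigma}_\star\|}{\tau^3(\mc{A}_\delta)}\left(\frac{1}{M}\sum_m|\xi_m|+\varepsilon\right)$. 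The noise average and the probability $1-\exp(-2t^2)$ carry over verbatim from Corollary~\ref{cor:main}.

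There is essentially no obstacle here beyond bookkeeping. The only points requiring care are the directions of the two substitutions --- upper-bounding a lower bound on $M$ keeps the hypothesis sufficient, while upper-bounding the multiplicative constant in the error keeps the resulting inequality valid --- and the elementary identity $\E\|\nabla f(\mb{x}_\star)\|_2^2=\operatorname{tr}(\mb{\varSigma}_\star)$. Since $\tau(\mc{A}_\delta)$ and all other definitions are unchanged, this completes the reduction.
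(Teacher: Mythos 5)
Your proposal is correct and follows essentially the same route as the paper: bound $\mfk{C}_M(\mc{A}_\delta)\le\mfk{C}_M(\mbb{R}^N)\le\sqrt{\E\|\nabla f(\mb{x}_\star)\|_2^2}=\sqrt{\operatorname{tr}(\mb{\varSigma}_\star)}$ via monotonicity and the inclusion $\mc{A}_\delta\subseteq\mbb{R}^N$, use $\varsigma^2(\mc{A}_\delta)\le\|\mb{\varSigma}_\star\|$, and substitute both upper bounds into Corollary~\ref{cor:main}, which is exactly how the paper specializes it. Your explicit checks of the monotonicity directions and of the Rayleigh-quotient and trace identities are details the paper leaves implicit, but the argument is the same.
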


Since $\operatorname{tr}(\mb{\varSigma_\star})\leq N\|\mb{\varSigma_\star}\|$, it suffices to take
\[
	M ~\geq~ \frac{64\|\mb{\varSigma}_\star\|^2}{\tau^4(\mc{A}_\delta)}\left(4\sqrt{N}\,\frac{\sqrt{\|\mb{\varSigma}_\star\|}}{\tau(\mc{A}_\delta)} + t\right)^{2}.
\]
When $\sqrt{\|\mb{\varSigma}_\star\|}/\tau(\mc{A}_\delta)$ is on the order of a constant, this means we can take $M\gtrsim N$.  Provided we have an anchor vector that obeys \eqref{eq:anchor}, we can robustly recover a vector $\mb{x}_\star$ of length $N$ through observations of slightly more than $N$ convex functions of $\mb{x}_\star$. 

To see the above result in more explicit examples, we consider some special cases where with $\mb{a}\sim\mr{Normal}(\mb{0},\mb{I})$, we have $f(\mb{x}) = \phi(\mb{a}^\T \mb{x})$ for linear regression ($\phi(z) = z$), phase retrieval over the real numbers ($\phi(z) = z^2$), and regression with a \textit{rectified linear unit} (ReLU) ($\phi(z)=(z)_+$). In the latter special case the assumed differentiability of $f$ is violated, but the arguments still hold if, with abuse of notation, we define $\phi'(t)=\bbone(t\ge 0)$ and consequently $\nabla f(\mb{x}) = \bbone(\mb{a}^\T\mb{x}\ge 0)\mb{a}$ which is a true subgradient. 

\begin{exmp}\label{exmp:unstructured-regression}
In view of the discussion above, we find appropriate approximations for  $\left\lVert \mb{\varSigma}_\star \right\rVert$ and $\tau(\mc{A_\delta})$ which are the crucial quantities in the derived sample complexity and error bound,  in the context of linear regression, phase retrieval, and ReLU regression, as described above. 
We have 
\begin{align*}
\E\left((\langle\nabla f(\mb{x}_\star),\mb{h}\rangle)_+\right) &= \E\left((\phi'(\mb{a}^\T \mb{x}_\star) \mb{a}^\T\mb{h})_+\right)\\
																						 &=\frac{1}{2} \E\left(\phi'(\mb{a}^\T \mb{x}_\star) \mb{a}^\T\mb{h}\right) + \frac{1}{2}\E\left(\left|\phi'(\mb{a}^\T \mb{x}_\star) \mb{a}^\T\mb{h}\right|\right)
\end{align*}
In the case of linear regression we have $\phi(z) = z$ which yields
 \begin{align*}
\E\left((\langle\nabla f(\mb{x}_\star),\mb{h}\rangle)_+\right) &= \frac{1}{2} \E\left(\mb{a}^\T\mb{h}\right) + \frac{1}{2}\E\left(\left|\mb{a}^\T\mb{h}\right|\right)\\
																						&= \frac{1}{\sqrt{2\pi}}\left\Vert\mb{h}\right\Vert_2\,, 	
\end{align*}
which implies that $\tau(\mc{A}_\delta) = 1/\sqrt{2\pi}$.
Since $\mb{\varSigma}_\star  = \E\left(\mb{a}\mb{a}^\T\right)=\mb{I}$, we deduce that $\sqrt{\left\Vert\mb{\varSigma}_\star\right\rVert}/\tau(\mc{A_\delta})= \sqrt{2\pi}$.

In the case of phase retrieval where $\phi(z)=z^2$, using some Gaussian integral calculations \citep[Corollary 3.1]{Li-Gaussian-2009} we obtain 
\begin{align}
\E\left((\langle\nabla f(\mb{x}_\star),\mb{h}\rangle)_+\right) &=2\E\left((\mb{a}^\T \mb{x}_\star) (\mb{a}^\T\mb{h})\right) + \E\left(\left|(\mb{a}^\T \mb{x}_\star) (\mb{a}^\T\mb{h})\right|\right)\nonumber\\
																						 &=2\left\Vert\mb{x}_\star\right\Vert_2 \left\Vert\mb{h}\right\Vert_2 \left(r(\mb{h}) + \frac{2}{\pi}\left(\sqrt{1-r^2(\mb{h})}+r(\mb{h}) \arcsin\left(r(\mb{h})\right)\right)  \right)\,,\label{eq:PR-tau}
\end{align}
where $r(\mb{h}) = \tfrac{\mb{x}_\star^\T \mb{h}}{\left\Vert\mb{x}_\star\right\Vert_2\left\Vert \mb{h}\right\Vert_2}$. By definition, for every $\mb{h}\!\in\!\mc{A}_\delta$ we have $r(\mb{h})\!\ge\! \max\lbrace-1,\!-\tfrac{\sqrt{1-\delta^2}}{\delta}\rbrace$. Because $z \mapsto z +\frac{2}{\pi}(\sqrt{1-z^2} + z\arcsin z)$ is increasing in $z$, we deduce that for a sufficiently large $\delta\le 1$ there exists a constant $c>0$ such that 
\[\E\left((\langle\nabla f(\mb{x}_\star),\mb{h}\rangle)_+\right) \ge c \left\Vert \mb{x}_\star\right\Vert_2\left\Vert\mb{h}\right\Vert_2\,,\]
for every $\mb{h}\in\mc{A}_\delta$, which implies that $\tau(\mc{A}_\delta) \ge c\left\Vert\mb{x}_\star\right\Vert_2$. Furthermore, we have 
\[\mb{\varSigma}_\star =2 \E\left((\mb{a}^\T\mb{x}_\star)^2\mb{a}\mb{a}^\T\right)=4\mb{x}_\star\mb{x}_\star^\T + 2\left\Vert\mb{x}_\star\right\Vert_2^2\mb{I}\,,\]
and thus $\left\Vert\mb{\varSigma}_\star\right\Vert = 6\left\Vert\mb{x}_\star\right\Vert_2^2 $. Therefore, $\sqrt{\left\Vert\mb{\varSigma}_\star\right\rVert}/\tau(\mc{A_\delta})= \sqrt{6}/c$.

Finally, in the case of ReLU regression where $\phi(z)=(z)_+$, we have 
\begin{align*}
\E\left((\langle\nabla f(\mb{x}_\star),\mb{h}\rangle)_+\right) &= \E\left(\bbone(\mb{a}^\T\mb{x}_\star \ge 0 )(\mb{a}^\T\mb{h})_+\right)\\
																						 &= \frac{1}{4}\E\left((1+\sgn(\mb{a}^\T\mb{x}_\star))(\mb{a}^\T\mb{h}+\left|\mb{a}^\T\mb{h}\right|)\right)\\
																						 &= \frac{1}{4}\E\left(\sgn(\mb{a}^\T\mb{x}_\star)(\mb{a}^\T\mb{h})\right)+\frac{1}{4}\E\left(\left|\mb{a}^\T\mb{h}\right|\right)\,,
\end{align*}
where we used the fact that $\mb{a}$ has a symmetric probability density to obtain the third equation. Let $\mb{h}_\perp$ denote the projection of $\mb{h}$ on the orthogonal complement of $\mr{span}\left(\lbrace\mb{x}_\star\rbrace\right)$. With this notation we can write 
\begin{align*}
\E\left((\langle\nabla f(\mb{x}_\star),\mb{h}\rangle)_+\right) &= \frac{1}{4}\E\left(\sgn(\mb{a}^\T\mb{x}_\star)(\mb{a}^\T(\mb{h}_\perp+\mb{h}-\mb{h}_\perp)\right)+\frac{1}{4}\E\left(\left|\mb{a}^\T\mb{h}\right|\right)\,.
\end{align*}
Because $\mb{a}$ has a standard Gaussian density, $\mb{a}^\T\mb{x}_\star$ and $\mb{a}^\T\mb{h}_\perp$ are independent. Therefore, with $r(\mb{h})$ as in the case of phase retrieval above, and using the fact that $\mb{h}-\mb{h}_\perp = ((\mb{x}_\star^\T \mb{h})/\left\Vert \mb{x}_\star\right\Vert_2^2)\,\mb{x}_\star$, we deduce that
\begin{align*}
\E\left((\langle\nabla f(\mb{x}_\star),\mb{h}\rangle)_+\right) &= \frac{1}{4}\E\left(\sgn(\mb{a}^\T\mb{x}_\star)(\mb{a}^\T(\mb{h}-\mb{h}_\perp)\right)+\frac{1}{4}\E\left(\left|\mb{a}^\T\mb{h}\right|\right)\\
																						 &= \frac{\mb{x}_\star^\T\mb{h}}{4\left\Vert\mb{x}_\star\right\Vert_2^2}\E\left(\left|\mb{a}^\T\mb{x}_\star \right|\right)+\frac{1}{\sqrt{8\pi}}\left\Vert\mb{h}\right\Vert_2\\																						 
																						 &=\frac{1+r(\mb{h})}{\sqrt{8\pi}}\left\Vert\mb{h}\right\Vert_2\,.
\end{align*}
Since $r(\mb{h})\ge\max \lbrace -1, -\sqrt{1-\delta^2}/\delta\rbrace$ for every $\mb{h}\in\mc{A}_\delta$, we have $\tau(\mc{A}_\delta)\!\ge\! \tfrac{1}{\sqrt{8\pi}}\left(1\!-\!\sqrt{1\!-\!\delta^2}/\delta\right)_+$. Thus, for a sufficiently large $\delta\le 1$ we can have $\tau(\mc{A}_\delta)\ge c$ for some constant $c>0$. Because $\mb{\varSigma}_\star = \E \left(\bbone\left(\mb{a}^\T\mb{x}_\star \ge 0\right)\mb{a}\mb{a}^\T \right)=\frac{1}{2}\mb{I}$, we obtain $\sqrt{\left\Vert\mb{\varSigma}_\star\right\Vert}/\tau(\mc{A}_\delta) \le 1/(c\sqrt{2})$. Following the ideas described in Section \ref{sec:recipes} below, we can show that $\mb{a}_0 = -\tfrac{1}{N}\sum_{i=1}^N y_i \mb{a}_i$ can be used as the anchor for ReLU regression and the overall sample complexity is $M=O(N)$. This result can be compared with the result of \cite{Soltanolkotabi-Learning-2017} that considers the non-convex (projected) gradient descent on the non-linear least squares formulation of the ReLU regression with regularization. Under the same Gaussian observation model, \cite{Soltanolkotabi-Learning-2017} establishes a sample complexity for the gradient descent initialized at the origin to converge at a linear rate. In the unregularized case above, this sample complexity is $M=O(N)$ that we also obtained. The analysis of \cite{Soltanolkotabi-Learning-2017} relies on the notion of \textit{Gaussian width} to measure the sample complexity; the Gaussian distribution of the vectors $\mb{a}_m$ is critical in this analysis. Our analysis applies more broadly as we only require to appropriately bound the constants $\tau(\mc{A}_\delta)$ and $\varsigma(\mc{A}_\delta)$.
\end{exmp}

\subsubsection{Sparse regression}

Next, we approximate $\mfk C_{M}\left(\mc A_{\delta}\right)$ for
the case of $\ell_{1}$-regularized anchored regression, i.e., $\Omega\left(\mb x\right)=\lambda\left\lVert \mb x\right\rVert _{1}$
for some $\lambda>0$. Let $\mc S_{\star}$ denote the support set
of $\mb x_{\star}$ (i.e., $\mc S_{\star}=\left\{ i\in\left[1,N\right]\,:\,x_{\star i}\ne0\right\} $)
and $s=\left|\mc S_{\star}\right|$. The subdifferential of the $\Omega\left(\cdot\right)$
at $\mb x_{\star}$ can be expressed as 
\begin{align*}
\partial\Omega\left(\mb x_{\star}\right) & =\left\{ \mb g\,:\,\left\lVert \mb g\right\rVert _{\infty}\le\lambda\text{ and }{\mb g\vert}_{\mc S_{\star}}=\lambda\sgn\left({\mb x_{\star}\vert}_{\mc S_{\star}}\right)\right\} \,.
\end{align*}
Therefore, we have 
\begin{align}
\mc A_{\delta} & =\left\{ \mb h\,:\,\sqrt{1-\delta^2}\left\lVert \mb h\right\rVert _{2}-\lambda\left\lVert {\mb h\vert}_{\mc S_{\star}^{\mr c}}\right\rVert _{1}+\langle\frac{\delta{\mb x_{\star}\vert}_{\mc S_{\star}}}{\left\lVert \mb x_{\star}\right\rVert _{2}}-\lambda\sgn\left({\mb x_{\star}\vert}_{\mc S_{\star}}\right),{\mb h\vert}_{\mc S_{\star}}\rangle\ge0\right\} \,. \label{eq:Adelta-sparse}
\end{align}
 It follows from the Cauchy-Schwarz inequality that
\begin{align*}
\langle\frac{\delta{\mb x_{\star}\vert}_{\mc S_{\star}}}{\left\lVert \mb x_{\star}\right\rVert _{2}}-\lambda\sgn\left({\mb x_{\star}\vert}_{\mc S_{\star}}\right),{\mb h\vert}_{\mc S_{\star}}\rangle & \le\left\lVert \frac{\delta{\mb x_{\star}\vert}_{\mc S_{\star}}}{\left\lVert \mb x_{\star}\right\rVert _{2}}-\lambda\sgn\left({\mb x_{\star}\vert}_{\mc S_{\star}}\right)\right\rVert _{2}\left\lVert {\mb h\vert}_{\mc S_{\star}}\right\rVert _{2}\\
 & =\left\lVert \frac{\delta\mb x_{\star}}{\left\lVert \mb x_{\star}\right\rVert _{2}}-\lambda\sgn\left(\mb x_{\star}\right)\right\rVert _{2}\,\left\lVert {\mb h\vert}_{\mc S_{\star}}\right\rVert _{2}\,,
\end{align*}
 thereby
\begin{align*}
\mc A_{\delta} & \subseteq\mc A'_{\delta}\defeq\left\lbrace\mb h\,:\,\sqrt{1-\delta^2}\left\lVert \mb h\right\rVert _{2}-\lambda\left\lVert {\mb h\vert}_{\mc S_{\star}^{\mr c}}\right\rVert _{1}+\left\lVert \frac{\delta\mb x_{\star}}{\left\lVert \mb x_{\star}\right\rVert _{2}}-\lambda\sgn\left(\mb x_{\star}\right)\right\rVert _{2}\left\lVert {\mb h\vert}_{\mc S_{\star}}\right\rVert _{2}\ge0\right\rbrace\,.
\end{align*}
 For any non-zero $\mb h$ and any $\mb z\in\mbb R^{N}$, it follows
from H\"{o}lder's inequality that 
\begin{align}
\langle\mb z,\frac{\mb h}{\left\lVert \mb h\right\rVert _{2}}\rangle & \le\left\lVert {\mb z\vert}_{\mc S_{\star}}\right\rVert _{2}\frac{\left\lVert {\mb h\vert}_{\mc S_{\star}}\right\rVert _{2}}{\left\lVert \mb h\right\rVert _{2}}+\left\lVert {\mb z\vert}_{\mc S_{\star}^{\mr c}}\right\rVert _{\infty}\frac{\left\lVert {\mb h\vert}_{\mc S_{\star}^{\mr c}}\right\rVert _{1}}{\left\lVert \mb h\right\rVert _{2}}\,.\label{eq:head-tail-bound}
\end{align}
 Therefore, recalling the definition \eqref{eq:complexity}, we can
write
\begin{align*}
\mfk C_{M}\left(\mc A_{\delta}\right) & \le\mfk C_{M}\left(\mc A'_{\delta}\right)\\
 & =\ \E\sup_{\mb h\in\mc A'_{\delta}}\,\frac{1}{\sqrt{M}}\sum_{m=1}^{M}\epsilon_{m}\langle\nabla f_{m}\left(\mb x_{\star}\right),\frac{\mb h}{\left\lVert \mb h\right\rVert _{2}}\rangle\\
 & \le\ \E\,\sup_{\mb h\in\mc A'_{\delta}}\,\left\lVert \frac{1}{\sqrt{M}}\sum_{m=1}^{M}\epsilon_{m}{\nabla f_{m}\left(\mb x_{\star}\right)\vert}_{\mc S_{\star}}\right\rVert _{2}\frac{\left\lVert {\mb h\vert}_{\mc S_{\star}}\right\rVert _{2}}{\left\lVert \mb h\right\rVert _{2}}\\
 & \hspace{3ex}+\E\,\sup_{\mb h\in\mc A'_{\delta}}\,\left\lVert \frac{1}{\sqrt{M}}\sum_{m=1}^{M}\epsilon_{m}{\nabla f_{m}\left(\mb x_{\star}\right)\vert}_{\mc S_{\star}^{\mr c}}\right\rVert _{\infty}\frac{\left\lVert {\mb h\vert}_{\mc S_{\star}^{\mr c}}\right\rVert _{1}}{\left\lVert \mb h\right\rVert _{2}}
\end{align*}
where the first inequality follows from the fact that $\mc A_{\delta}\subseteq\mc A'_{\delta}$
and the second inequality is an application of \eqref{eq:head-tail-bound}
for $\mb z=\frac{1}{\sqrt{M}}\sum_{m=1}^{M}\epsilon_{m}\nabla f_{m}\left(\mb x_{\star}\right)$
followed by a triangle inequality. We always have $\frac{\left\lVert {\mb h\vert}_{\mc S_{\star}}\right\rVert _{2}}{\left\lVert \mb h\right\rVert _{2}}\le1$
and the definition of $\mc A'_{\delta}$ implies that 
\begin{align*}
\frac{\left\lVert {\mb h\vert}_{\mc S_{\star}^{\mr c}}\right\rVert _{1}}{\left\lVert \mb h\right\rVert _{2}} & \le\frac{\sqrt{1-\delta^2}}{\lambda}+\frac{1}{\lambda}\,\left\lVert \frac{\delta\mb x_{\star}}{\left\lVert \mb x_{\star}\right\rVert _{2}}-\lambda\sgn\left(\mb x_{\star}\right)\right\rVert _{2}\frac{\left\lVert {\mb h\vert}_{\mc S_{\star}}\right\rVert _{2}}{\left\lVert \mb h\right\rVert _{2}}\\
 & \le\frac{\sqrt{1-\delta^2}}{\lambda}+\frac{1}{\lambda}\,\left\lVert \frac{\delta\mb x_{\star}}{\left\lVert \mb x_{\star}\right\rVert _{2}}-\lambda\sgn\left(\mb x_{\star}\right)\right\rVert _{2}\,.
\end{align*}
 Hence, we deduce that 
\begin{align*}
\mfk C_{M}\left(\mc A_{\delta}\right) & \le\E\,\left\lVert \frac{1}{\sqrt{M}}\sum_{m=1}^{M}\epsilon_{m}{\nabla f_{m}\left(\mb x_{\star}\right)\vert}_{\mc S_{\star}}\right\rVert _{2}\\
 & \hspace{3ex}+\left(\frac{\sqrt{1-\delta^2}}{\lambda}+\frac{1}{\lambda}\,\left\lVert \frac{\delta\mb x_{\star}}{\left\lVert \mb x_{\star}\right\rVert _{2}}-\lambda\sgn\left(\mb x_{\star}\right)\right\rVert _{2}\right)\E\,\left\lVert \frac{1}{\sqrt{M}}\sum_{m=1}^{M}\epsilon_{m}{\nabla f_{m}\left(\mb x_{\star}\right)\vert}_{\mc S_{\star}^{\mr c}}\right\rVert _{\infty}\,.
\end{align*}
 Next, we bound the terms with the expectation. For the first term,
the Cauchy-Schwarz inequality and the fact that $\epsilon_{m}$s are
independent and zero-mean guarantees that 
\begin{align*}
\E\,\left\lVert \frac{1}{\sqrt{M}}\sum_{m=1}^{M}\epsilon_{m}{\nabla f_{m}\left(\mb x_{\star}\right)\vert}_{\mc S_{\star}}\right\rVert _{2} & \le\sqrt{\E\,\left\lVert \frac{1}{\sqrt{M}}\sum_{m=1}^{M}\epsilon_{m}{\nabla f_{m}\left(\mb x_{\star}\right)\vert}_{\mc S_{\star}}\right\rVert _{2}^{2}}\\
 & =\sqrt{\frac{1}{M}\sum_{m=1}^{M}\E\,\left\lVert {\nabla f_{m}\left(\mb x_{\star}\right)\vert}_{\mc S_{\star}}\right\rVert _{2}^{2}}\\
 & =\sqrt{\E\,\left\lVert {\nabla f\left(\mb x_{\star}\right)\vert}_{\mc S_{\star}}\right\rVert _{2}^{2}}\,.
\end{align*}
For the second term, a similar application of the Cauchy-Schwarz inequality,
followed by the Nemirovski's inequality (see \citep[Lemma 5.2.2]{Nemirovski-Topics-2000}
and \citep[ Theorem 2.2]{Dumbgen-Nemirovski-2010}) with the minor
assumption that $N-s\ge3$, yields
\begin{align*}
\E\,\left\lVert \frac{1}{\sqrt{M}}\sum_{m=1}^{M}\epsilon_{m}{\nabla f_{m}\left(\mb x_{\star}\right)\vert}_{\mc S_{\star}^{\mr c}}\right\rVert _{\infty} & \le\sqrt{\E\,\left\lVert \frac{1}{\sqrt{M}}\sum_{m=1}^{M}\epsilon_{m}{\nabla f_{m}\left(\mb x_{\star}\right)\vert}_{\mc S_{\star}^{\mr c}}\right\rVert _{\infty}^{2}}\\
 & \le\sqrt{2e\log\left(N-s\right)\,\cdot\frac{1}{M}\sum_{m=1}^{M}\E\,\left\lVert {\nabla f_{m}\left(\mb x_{\star}\right)\vert}_{\mc S_{\star}^{\mr c}}\right\rVert _{\infty}^{2}}\\
 & \le\sqrt{2e\log N\,\E\,\left\lVert {\nabla f\left(\mb x_{\star}\right)\vert}_{\mc S_{\star}^{\mr c}}\right\rVert _{\infty}^{2}}\,.
\end{align*}
Recalling the definitions  \eqref{eq:taudefs} and  \eqref{eq:DAdelta}, we can specialize Corollary \ref{cor:main} as follows.
\begin{cor}
\label{cor:sparse-regression} Let \[C_{\delta,\lambda}^{\star}=\sqrt{2e}\left(\frac{\sqrt{1-\delta^2}}{\lambda}+\frac{1}{\lambda}\,\left\lVert \frac{\delta\mb x_{\star}}{\left\lVert \mb x_{\star}\right\rVert _{2}}-\lambda\sgn\left(\mb x_{\star}\right)\right\rVert _{2}\right)\,.\]
For any $t>0$, if 
\begin{align*}
M & \ge \frac{64\,\varsigma^4(\mc{A}_\delta)}{\tau^4(\mc{A}_\delta)}\left(\frac{4\sqrt{\E\,\left\lVert {\nabla f\left(\mb x_{\star}\right)\vert}_{\mc S_{\star}}\right\rVert _{2}^{2}}+4C_{\delta,\lambda}^{\star}\sqrt{\log\left(N\right)\,\E\,\left\lVert {\nabla f\left(\mb x_{\star}\right)\vert}_{\mc S_{\star}^{\mr c}}\right\rVert _{\infty}^{2}}}{\tau(\mc{A}_\delta)}+t\right)^{2}\,,
\end{align*}
then with probability $\ge1-\exp(-2t^{2})$ any solution $\widehat{\mb x}$
of \eqref{eq:anchored-regression} obeys 
\begin{align*}
\left\lVert \widehat{\mb x}-\mb x_{\star}\right\rVert _{2} &\le  \frac{16\,\varsigma^2(\mc{A}_\delta)}{\tau^3(\mc{A}_{\delta})}\left(\frac{1}{M}\sum_{m=1}^{M}\left|\xi_{m}\right|+\varepsilon\right)\,.
\end{align*}
\end{cor}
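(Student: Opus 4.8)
The plan is to obtain Corollary~\ref{cor:sparse-regression} as a direct specialization of Corollary~\ref{cor:main}. The only quantity in Corollary~\ref{cor:main} that is not carried over verbatim is the Rademacher complexity $\mfk{C}_M(\mc{A}_\delta)$, so it suffices to upper bound $\mfk{C}_M(\mc{A}_\delta)$ by an explicit, $M$-independent constant and substitute it in. The key structural observation is that the right-hand side of the sample-complexity threshold \eqref{eq:Mcorlower} is increasing in $\mfk{C}_M(\mc{A}_\delta)$; hence if $M$ exceeds the threshold with $\mfk{C}_M(\mc{A}_\delta)$ replaced by any valid upper bound, then the hypothesis of Corollary~\ref{cor:main} holds, and its conclusion follows. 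In particular the error bound \eqref{eq:corerror}, which does not involve $\mfk{C}_M(\mc{A}_\delta)$ at all, is copied unchanged.

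First I would specialize $\mc{A}_\delta$ to $\Omega(\mb{x})=\lambda\|\mb{x}\|_1$. Using the standard description of $\partial\Omega(\mb{x}_\star)$ in terms of $\sgn(\mb{x}_\star\vert_{\mc S_\star})$ and the $\ell_\infty$-ball of radius $\lambda$ on $\mc S_\star^{\mr c}$, I arrive at the explicit form \eqref{eq:Adelta-sparse}. To decouple the support and off-support behavior, I bound the inner product over $\mc S_\star$ by Cauchy-Schwarz, yielding the relaxation $\mc{A}_\delta\subseteq\mc{A}'_\delta$ in which $\|\mb{h}\vert_{\mc S_\star}\|_2$ and $\|\mb{h}\vert_{\mc S_\star^{\mr c}}\|_1$ appear separately. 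The defining inequality of $\mc{A}'_\delta$ then controls the ratio $\|\mb{h}\vert_{\mc S_\star^{\mr c}}\|_1/\|\mb{h}\|_2$ uniformly over $\mc{A}'_\delta$, which is exactly the coefficient that will multiply the off-support contribution.

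Next I would estimate $\mfk{C}_M(\mc{A}_\delta)\le\mfk{C}_M(\mc{A}'_\delta)$ (monotonicity, since $\mc{A}_\delta\subseteq\mc{A}'_\delta$) by applying the head/tail splitting \eqref{eq:head-tail-bound} with $\mb{z}=M^{-1/2}\sum_m\epsilon_m\nabla f_m(\mb{x}_\star)$. This separates the supremum into an $\ell_2$ term on $\mc S_\star$, whose coefficient $\|\mb{h}\vert_{\mc S_\star}\|_2/\|\mb{h}\|_2\le 1$, and an $\ell_\infty$ term on $\mc S_\star^{\mr c}$, whose coefficient is the bounded ratio from the previous step. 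For the $\ell_2$ term, concavity of the square root (Jensen) together with the independence and zero mean of the $\epsilon_m$ and the i.i.d.\ structure of the $f_m$ gives $\sqrt{\E\|\nabla f(\mb{x}_\star)\vert_{\mc S_\star}\|_2^2}$. For the $\ell_\infty$ term, the same variance computation combined with Nemirovski's inequality supplies the factor $\sqrt{2e\log N}$, producing the constant $C_{\delta,\lambda}^\star$. Collecting the two contributions yields $4\mfk{C}_M(\mc{A}_\delta)\le 4\sqrt{\E\|\nabla f(\mb{x}_\star)\vert_{\mc S_\star}\|_2^2}+4C_{\delta,\lambda}^\star\sqrt{\log N\,\E\|\nabla f(\mb{x}_\star)\vert_{\mc S_\star^{\mr c}}\|_\infty^2}$, which is precisely the numerator appearing in the stated sample complexity.

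The main obstacle is the control of the off-support $\ell_\infty$ term. A naive union bound over the $N-s$ coordinates is lossy; the clean $\sqrt{\log N}$ dependence instead rests on Nemirovski's inequality \citep[Lemma 5.2.2]{Nemirovski-Topics-2000}, which requires the minor dimensional assumption $N-s\ge 3$ and is what keeps the sample complexity only logarithmic in the ambient dimension. Once this estimate is in hand, the remaining steps—the relaxation $\mc{A}_\delta\subseteq\mc{A}'_\delta$, the monotonicity of \eqref{eq:Mcorlower} in $\mfk{C}_M(\mc{A}_\delta)$, and the transcription of the error bound \eqref{eq:corerror}—are routine.
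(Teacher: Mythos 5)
Your proposal is correct and follows essentially the same route as the paper: specialize the subdifferential of $\lambda\|\cdot\|_1$ to get \eqref{eq:Adelta-sparse}, relax to $\mc{A}'_\delta$ via Cauchy--Schwarz, split head and tail with \eqref{eq:head-tail-bound}, bound the on-support term by Jensen and the off-support term by Nemirovski's inequality (with $N-s\ge 3$), and plug the resulting $M$-independent bound on $\mfk{C}_M(\mc{A}_\delta)$ into Corollary~\ref{cor:main}. The only cosmetic difference is that you make explicit the monotonicity of the threshold \eqref{eq:Mcorlower} in $\mfk{C}_M(\mc{A}_\delta)$, which the paper uses implicitly.
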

Let us consider the particular case of regression with linear predictors to make the result of Corollary \ref{cor:sparse-regression} more explicit.
\begin{exmp}[Sparse regression with linear predictor]
Suppose that $f\left(\mb x\right)=\phi(\mb a^{\T}\mb x)$
for $\mb a\!\sim\!\mr{Normal}\left(\mb 0,\mb I\right)$ and a convex $\phi(\cdot)$ that has derivatives of sufficiently high order. Moreover, in the calculations below we implicitly assume that the expectations that involve $\phi(\cdot)$ or its derivatives exist. Clearly, we can write $\nabla f\left(\mb x_{\star}\right)=\phi'(\mb{a}^\T\mb{x}_\star)\mb a$ and obtain
\begin{align*}
\E\left\lVert {\nabla f\left(\mb x_{\star}\right)\vert}_{\mc S_{\star}}\right\rVert _{2}^{2} & =  \E \left({\phi'}^2(\mb{a}^\T\mb{x}_\star) \left\lVert {\mb{a}\vert}_{\mc{S}_\star}\right\rVert_2^2\right)\\&=\E \left({\phi'}^2( \left\lVert\mb{x}_\star\right\rVert_2 g)(g^2 + \left|\mc S_{\star}\right| - 1) \right) , \qquad g\sim \mr{Normal}(0,1)\\
& =s\,\E\left({\phi'}^2(\left\lVert \mb{x}_\star \right\rVert_2 g) \right) \\
& \phantom{=}  + 2\E\left( {\phi^{''}}^2(\left\lVert \mb{x}_\star \right\rVert_2 g) + \phi'(\left\lVert \mb{x}_\star \right\rVert_2 g) \phi^{'''}(\left\lVert \mb{x}_\star \right\rVert_2 g) \right)\left\lVert\mb{x}_\star \right\rVert_2^2\,.
\end{align*}
where the second equation follows from independence of projections of ${\mb{a}\vert}_{\mc{S}_\star}$ onto subspaces parallel and orthogonal to $\mb{x}_\star$. The third equation follows by applying the Stein's lemma \cite{Stein-Bound-1972} twice.
Furthermore, we have 
\begin{align*}
\E\left\lVert {\nabla f\left(\mb x_{\star}\right)\vert}_{\mc S_{\star}^{\mr c}}\right\rVert _{\infty}^{2} &=  \E\left({\phi'}^2(\mb{a}^\T\mb{x}_\star)\max_{i\in \mc{S}_\star^\mr{c}} a_i^2\right)\\
& = \E\left({\phi'}^2(\mb{a}^\T\mb{x}_\star)\right)\E\left(\max_{i\in \mc{S}_\star^\mr{c}} a_i^2\right)\\
&\le \E\left({\phi'}^2(\left\lVert\mb{x}_\star\right\rVert_2 g)\right)\E\left(\sum_{i\in\mc{S}_\star^\mr{c}} a_i^{2q}\right)^{\frac{1}{q}}, \qquad g \sim\mr{Normal}(0,1)\\
& \le \E\left({\phi'}^2(\left\lVert\mb{x}_\star\right\rVert_2 g)\right) \left((N-s)\E\left(g^{2q}\right)\right)^\frac{1}{q}\,,
\end{align*}
where the second line holds by independence of $\mb{a}^\T\mb{x}_\star$ and $a_i$ for $i\in \mc{S}_\star^\mr{c}$, the third line holds for any positive integer $q$, and the fourth line follows from concavity of $t\mapsto t^{1/q}$ and the Jensen's inequality. Hence, for $q = O(\log N)$ we obtain
\[\E\left\lVert {\nabla f\left(\mb x_{\star}\right)\vert}_{\mc S_{\star}^{\mr c}}\right\rVert _{\infty}^{2} \lesssim \E\left({\phi'}^2(\left\lVert\mb{x}_\star\right\rVert_2 g)\right) \log N\,. \]
Treating the terms that only depend on $\left\lVert\mb{x}_\star\right\rVert_2$ as constants we can summarize the derived bounds as
\begin{align*}
\E\left\lVert {\nabla f\left(\mb x_{\star}\right)\vert}_{\mc S_{\star}}\right\rVert _{2}^{2} & \lesssim s\,, & \text{and} & &
\E\left\lVert {\nabla f\left(\mb x_{\star}\right)\vert}_{\mc S_{\star}^{\mr c}}\right\rVert _{\infty}^{2} 
&\lesssim \log N\,.
\end{align*} 
If we choose the regularization parameter as $\lambda = O(1/\sqrt{s})$ for a suitable constant factor that depends on $\delta$, then we obtain
\begin{align*}
C_{\delta,\lambda}^{\star} & \le c_{\delta}\sqrt{s}\,,
\end{align*}
 where $c_{\delta}$ is a constant decreasing in $\delta$. Therefore, we can conclude that having
\begin{align*}
M &\gtrsim \frac{\varsigma^4(\mc{A}_\delta)}{\tau^4(\mc{A}_\delta)}\left(\frac{\sqrt{s}+c_{\delta}\sqrt{s}\log\left(N\right)}{\tau\left(\mc A_{\delta}\right)}+t\right)^{2}
\end{align*}
is sufficient to meet the requirements in Corollary \ref{cor:sparse-regression}.
Assuming that $\varsigma(\mc{A}_\delta)$ and $\tau(\mc{A}_\delta)$ are constant terms, the result above suggests a sample complexity of $O\left(s\,\log^{2}N\right)$
which is optimal up to the dependence on $\log N$. A more refined bound can be obtained using a tighter bound for $\E\,\left\lVert \frac{1}{\sqrt{M}}\sum_{m=1}^{M}\epsilon_{m}{\nabla f_{m}\left(\mb x_{\star}\right)\vert}_{\mc S_{\star}^{\mr c}}\right\rVert _{\infty}$ instead of invoking the Nemirovski's inequality in the proof of Corollary \ref{cor:sparse-regression}.\looseness=-1

To complement the discussion above, we verify that $\varsigma(\mc{A}_\delta)$ and $\tau(\mc{A}_\delta)$ are constant terms  in the cases of \textit{compressed sensing} (i.e. $\phi(z) = z$) and \textit{sparse phase retrieval} (i.e., $\phi(z) = z^2$). This is easy for compressed sensing, because the derivations for linear regression in Example \ref{exmp:unstructured-regression} still apply. For sparse phase retrieval, similar to Example \ref{exmp:unstructured-regression}, $\varsigma(\mc{A}_\delta)$ can be bounded as 
\[\varsigma^2(\mc{A}_\delta) \le \left\Vert\mb{\varSigma}_\star\right\Vert =6\left\Vert\mb{x}_\star\right\Vert_2^2\,.\]
To bound $\tau(\mc{A}_\delta)$ we can use \eqref{eq:PR-tau} with $r(\mb{h})=\mb{x}_\star^\T\mb{h}/(\left\Vert\mb{x}_\star\right\Vert_2\left\Vert\mb{h}\right\Vert_2)$ as before. Recalling the definition \eqref{eq:Adelta-sparse} of $\mc{A}_{\delta}$, it is straightforward to show that $r(\mb{h})\! \ge\! \max\lbrace -1, -\tfrac{\sqrt{1-\delta^2}}{\delta}-\tfrac{\lambda(2\sqrt{s}-1)}{\delta}\rbrace$. Therefore, following a similar argument as in Example \ref{exmp:unstructured-regression}, for a sufficiently large $\delta\le 1$ and a suitable choice of the parameter $\lambda=O(1/\sqrt{s})$ we can show that $\tau(\mc{A}_\delta)$ is bounded below by a positive absolute constant.
\end{exmp}

\section{\label{sec:recipes}Recipes for creating anchors}
While it may be assumed that the anchor vector is provided by an oracle,
it is more realistic to have a data-driven method to construct the
anchor vector. This requires us to impose new assumptions, albeit
implicitly, on the class of functions $\mc F$ where the random samples
$f_{1},f_{2},\dotsc,f_{M}$ are drawn from. A natural assumption is
that for a certain sample loss function $\ell:\mbb R\times\mbb R\to\mbb R$
the corresponding \emph{risk} $R\left(\mb x\right)\defeq\E\ell\left(f\left(\mb x\right),f(\mb x_{\star})\right)$
``encodes'' the information about $\mb x_{\star}$ in its derivatives
at some reference point $\mb{x}_0$, i.e., $\nabla^{k}R\left(\mb{x}_0\right)$ for $k=1,2,3,\dotsc$. For simplicity, henceforth we take the origin as the reference point, i.e., $\mb{x}_0 = \mb{0}$.
This assumption is reasonable, for instance, if $R\left(\mb x\right)$
has a Taylor series approximation around the origin. Note that we
are implicitly assuming that the random functions drawn from the class
$\mc F$ as well as the sample loss $\ell\left(\cdot,\cdot\right)$
are $k$-times differentiable. Furthermore, to allow the derivative
operator and expectation to commute, the required regularity conditions
for the law of $f$ are assumed.
\begin{defn}[Spiked Derivatives]
We say that the risk $R\left(\mb x\right)=\E\ell\left(f\left(\mb x\right),f\left(\mb x_{\star}\right)\right)$
has a \emph{spiked $k\textup{th}$-order derivative} if $\mb x_{\star}$
is a \emph{simple principal eigenvector} of $-\nabla^{k}R\left(\mb 0\right)\ne\mb 0$.
Namely,
\begin{align}
\frac{\mb x_{\star}}{\left\lVert \mb x_{\star}\right\rVert _{2}}\in\argmax_{\mb u\in\mbb S^{N-1}}\  & \langle-\nabla^{k}R\left(\mb 0\right),\mb u^{\otimes k}\rangle\,,\label{eq:derviative-conditions}
\end{align}
 and every other maximizer is parallel to $\mb x_{\star}$.
\end{defn}
For computational considerations, we focus only on risks with \emph{spiked
gradient} (i.e., 1st-order derivative) or \emph{spiked Hessian} (i.e.,
2nd-order derivative).\emph{ }In the case of spiked gradients the
condition \eqref{eq:derviative-conditions} reduces to $-\nabla R\left(\mb 0\right)$
being perfectly aligned with $\mb x_{\star}$, that is, 
\begin{align}
\frac{\mb x_{\star}}{\left\lVert \mb x_{\star}\right\rVert _{2}} & =-\frac{\nabla R\left(\mb 0\right)}{\left\lVert \nabla R\left(\mb 0\right)\right\rVert _{2}}\,.\label{eq:gradient}
\end{align}
 Similarly, for spiked Hessians, the condition \eqref{eq:derviative-conditions}
reduces to $\mb x_{\star}$ being a \emph{simple} \emph{principal
eigenvector} of $-\nabla^{2}R\left(\mb 0\right)$. Specifically, we
have 
\begin{equation}
\begin{aligned}\frac{\mb x_{\star}}{\left\lVert \mb x_{\star}\right\rVert _{2}} & \in\argmax_{\mb u\in\mbb S^{N-1}}\ -\mb u^{\T}\nabla^{2}R\left(\mb 0\right)\mb u\\
\gamma_{\star} & \defeq\lambda_{1}\left(-\nabla^{2}R\left(\mb 0\right)\right)-\lambda_{2}\left(-\nabla^{2}R\left(\mb 0\right)\right)>0
\end{aligned}
\,,\label{eq:Hessian}
\end{equation}
where $\lambda_{i}\left(\cdot\right)$ denotes the $i$th largest
(multiplicity inclusive) eigenvalue of its argument. For example,
with the squared error $\ell\left(s,t\right)=\frac{1}{2}\left(s-t\right)^{2}$
as the sample loss function, if the risk $R\left(\mb x\right)=\E\,\frac{1}{2}\left(f\left(\mb x\right)-f\left(\mb x_{\star}\right)\right)^{2}$
has a spiked gradient then
\begin{align*}
\frac{\mb x_{\star}}{\left\lVert \mb x_{\star}\right\rVert _{2}} & =\frac{1}{\left\lVert \E\left(\left(f(\mb x_{\star})-f(\mb 0)\right)\nabla f\left(\mb 0\right)\right)\right\rVert _{2}}\E\left(\left(f(\mb x_{\star})-f(\mb 0)\right)\nabla f\left(\mb 0\right)\right)\,.
\end{align*}
 and if it has spiked Hessian then 
\begin{align*}
\frac{\mb x_{\star}}{\left\lVert \mb x_{\star}\right\rVert _{2}} & \in\argmax_{\mb u\in\mbb S^{N-1}}\ \mb u^{\T}\E\left(f(\mb x_{\star})\nabla^{2}f\left(\mb 0\right)-\nabla f\left(\mb 0\right)\nabla^{\T}f\left(\mb 0\right)\right)\mb u\,,
\end{align*}
and the Hessian has a positive spectral gap.

Let us again consider the problem of regression with linear predictors as a concrete example.
\begin{exmp}[Spiked derivatives in regression with linear predictor]\label{exmp:spiked-derivatives} Let $f(\mb{x})=\phi(\mb{a}^\T\mb{x})$ for $\mb{a}\sim\mr{Normal}(\mb{0},\mb{I})$ and a twice-differentiable convex function  $\phi(\cdot)$. Without loss of generality we may assume that $\phi(0) = 0$. Suppose that we observe $M$ i.i.d. samples of $f(\mb{x}_\star)$, i.e., $y_m = f_m(\mb{x}_\star) = \phi(\mb{a}_m^\T\mb{x}_\star)$ for $m=1,2,\dotsc,M$ with $\mb{a}_m$s being i.i.d. copies of $\mb{a}$. The risk with respect to the squared loss $\ell(u,v) = \tfrac{1}{2}(u-v)^2$ is 
\[R(\mb{x}) = \frac{1}{2}\E \left(\phi(\mb{a}^\T\mb{x}) - \phi(\mb{a}^\T\mb{x}_\star)\right)^2\,.\] Therefore, we can calculate $\nabla R(\mb 0)$ as 
\begin{align*}
\nabla R(\mb{0}) & = \E\left(\phi'(0)\left(\phi(0) - \phi(\mb{a}^\T\mb{x}_\star) \right)\mb{a}\right)\\
& = -\phi'(0) \E\left(\phi(\mb{a}^\T\mb{x}_\star)\mb{a}\right)\\
& = -\phi'(0)\E\left(\phi'(\left\lVert\mb{x}_\star\right\rVert_2g)\right)\mb{x}_\star\,, & g\sim\mr{Normal}(0,1)\,, 
\end{align*}
where the third line follows from Stein's lemma. Similarly, we can write
\begin{align*}
\nabla^2 R(\mb{0}) &= \E\left(\left({\phi'}^2(0) - \phi''(0)\phi(\mb{a}^\T\mb{x}_\star)\right)\mb{a}\mb{a}^\T\right) \\
& = \left({\phi'}^2(0) - \phi''(0)\E\left(\phi(\mb{a}^\T\mb{x}_\star)\right) \right)\mb{I} - \phi''(0)\E\left(\phi''(\mb{a}^\T\mb{x}_\star)\right)\mb{x}_\star\mb{x}_\star^\T\\
& = \left({\phi'}^2(0) - \phi''(0)\E\left(\phi(\left\lVert\mb{x}_\star\right\rVert_2 g)\right) \right)\mb{I} - \phi''(0)\E\left(\phi''(\left\lVert\mb{x}_\star\right\rVert_2 g)\right)\mb{x}_\star\mb{x}_\star^\T\,.
\end{align*}
If $\phi'(0)\ne 0$ then $R(\cdot)$ clearly has a spiked gradient. If $\phi'(0)=0$, however, $\nabla R(\mb{0})=\mb{0}$ and we need to inspect the second derivative. In that case, if $\phi''(0)\ne 0$ then we have
 \[\nabla^2 R(\mb{0}) = - \phi''(0)\E\left(\phi''(\left\lVert\mb{x}_\star\right\rVert_2 g)\right)\mb{x}_\star\mb{x}_\star^\T - \phi''(0)\E\left(\phi(\left\lVert\mb{x}_\star\right\rVert_2 g)\right)\mb{I}\,.\]
 Because $\phi(\cdot)$ is convex, $\phi''(\cdot)$ is non-negative and the matrix above is clearly a spiked Hessian with spectral gap $\gamma_\star = \phi''(0)\left\lVert\mb{x}_\star\right\rVert_2^2 \E\left(\phi''(\left\lVert\mb{x}_\star\right\rVert_2 g) \right)$. By a simple change of variable the derivations above can be extended to the case where the Gaussian random vector $\mb{a}$ is still zero-mean, but has an arbitrary covariance matrix.

 Let us look at the special cases considered in Example \ref{exmp:unstructured-regression}, namely,  linear regression, phase retrieval, and ReLU regression. In linear regression $\phi(z)=z$, thus we have $\nabla R(\mb{0})=-\mb{x}_\star$. Similarly, for ReLU regression, where $\phi(z)=(z)_+$, defining $\phi'(z)=\bbone(z\ge0)$ as in Example \ref{exmp:unstructured-regression}, we obtain
 \begin{align*}
 \nabla R(\mb{0}) &= -\E\left(\bbone(g\ge 0)\right)\mb{x}_\star =-\frac{1}{2}\E\left(1 + \sgn(g)\right)\mb{x}_\star = -\frac{1}{2}\mb{x}_\star
 \end{align*}
 Therefore, we have spiked gradients in both of the considered linear and ReLU regression models. In the case of phase retrieval we have $\phi(z)=z^2$ and it is clear that $\nabla R(\mb{0})=\mb{0}$. However, by straightforward calculations we obtain
 \begin{align*}
 \nabla^2 R(\mb{0}) &= -2\E\left|\mb a^{\T}\mb x_{\star}\right|^{2}\mb a\mb a^{\T}=-4\mb{x}_\star \mb{x}_\star^\T - 2\left\Vert\mb{x}_\star \right\Vert^2_2\mb{I}\,,
 \end{align*}
 which means that we have a spiked Hessian with the spectral gap $\gamma_\star = 4\Vert\mb{x}_\star\Vert_2^2$. This property is leveraged in the ``spectal initialization'' for non-convex phase retrieval methods (see, e.g., \cite{Candes_Phase_2014}).
 \end{exmp}
 
Of course, in practice we do not have access to $R\left(\mb x\right)$.
With finite number of observations, however, the empirical risk 
\begin{align*}
R_{M}\left(\mb x\right) & =\frac{1}{M}\sum_{m=1}^{M}\ell\left(f_{m}\left(\mb x\right),y_{m}\right)\,,
\end{align*}
 may provide a sufficiently good approximation for $R\left(\mb x\right)$.
Therefore, with sufficient number of observations, $\nabla R_{M}\left(\mb 0\right)$,
as an approximation to $\nabla R\left(\mb 0\right)$, or $\nabla^{2}R_{M}\left(\mb 0\right)$,
as an approximation to $\nabla^{2}R\left(\mb 0\right)$, can be used
to find an anchor vector under the spiked gradient or spiked Hessian
conditions, respectively. 

In case of the spiked gradient, it suffices
to have sufficiently large number of samples (i.e., $M$) such that
\begin{align*}
\mb a_{0} & =-\frac{\nabla R_{M}\left(\mb 0\right)}{\left\lVert \nabla R_{M}\left(\mb 0\right)\right\rVert }_{2}\,,
\end{align*}
obeys anchor vectors' required property \eqref{eq:anchor}. Suppose that the law of $f$ is such that $\nabla R_M(\mb{0})$ concentrates around $\nabla R(\mb{0})$, namely 
\[ \left\lVert \nabla R_M(\mb{0}) - \nabla R(\mb{0})\right\rVert_2\le \epsilon \left\lVert \nabla R(\mb{0}) \right\rVert_2\,, \]
with high probability for  a small $\epsilon \in (0,1)$. If $R(\cdot)$ has a spiked gradient, then we obtain
\begin{align*}
\langle \mb{a}_0,\frac{\mb{x}_\star}{\left\lVert \mb{x}_\star\right\rVert_2}\rangle & =  \langle \frac{\nabla R_{M}\left(\mb 0\right)}{\left\lVert \nabla R_{M}\left(\mb 0\right)\right\rVert }_{2}, \frac{\nabla R\left(\mb 0\right)}{\left\lVert \nabla R\left(\mb 0\right)\right\rVert }_{2}\rangle \\
 & \ge \frac{(1-\epsilon^2){\left\lVert \nabla R\left(\mb 0\right)\right\rVert}_2^2+ {\left\lVert \nabla R_M\left(\mb 0\right)\right\rVert}_2^2}{2{\left\lVert \nabla R_M\left(\mb 0\right)\right\rVert }_{2}{\left\lVert \nabla R\left(\mb 0\right)\right\rVert}_2}\\
 & \ge \sqrt{1-\epsilon^2}\,,
\end{align*}
where the third line follows from the AM-GM inequality. Therefore, $\mb{a}_0$ would satisfy \eqref{eq:anchor} with $\delta = \sqrt{1-\epsilon^2}$.

In the case that $R\left(\cdot\right)$ has a spiked Hessian, using a variant
of the Davis\textendash Kahan's theorem \citep[Corollary 3]{Yu-DavisKahan-2014},
we can show that for 
\begin{align}
\mb a_{0} & \in\argmax_{\mb u\in\mbb S^{N-1}}\ -\mb u^{\T}\nabla^{2}R_{M}\left(\mb 0\right)\mb u\,,\label{eq:empirical-Hessian}
\end{align}
we have 
\begin{align*}
\left\lVert \mb a_{0}\mb a_{0}^{\T}-\frac{1}{\left\lVert \mb x_{\star}\right\rVert _{2}^{2}}\mb x_{\star}\mb x_{\star}^{\T}\right\rVert  & \le\frac{2\left\lVert \nabla^{2}R_{M}\left(\mb 0\right)-\nabla^{2}R\left(\mb 0\right)\right\rVert }{\gamma_{\star}}\,,
\end{align*}
where $\gamma_{\star}$ is the spectral gap defined in \eqref{eq:Hessian}.
Therefore, if $\left\lVert \nabla^{2}R_{M}\left(\mb 0\right)-\nabla^{2}R\left(\mb 0\right)\right\rVert $
is sufficiently small relative to the spectral gap $\gamma_{\star}$, the inequality above implies  $\mb a_{0}$ (or $-\mb a_{0}$) can
obey the required property \eqref{eq:anchor} for some $\delta>0$.
Depending on the law of $f$, we can bound $\left\lVert \nabla^{2}R_{M}\left(\mb 0\right)-\nabla^{2}R\left(\mb 0\right)\right\rVert $
using matrix concentration inequalities as $\nabla^{2}R_{M}\left(\mb 0\right)$
can be written as a sum of independent random matrices.

As a concrete example,  consider phase retrieval for a (real-valued) target $\mb x_{\star}$ using noiseless measurements obtained through
i.i.d. copies of $\mb a\sim\mr{Normal}\left(\mb 0,\mb I\right)$ as
the measurement vectors. As mentioned in Example \ref{exmp:spiked-derivatives} above, the risk
$R\left(\mb x\right)=\E\tfrac{1}{2}\left(\left|\mb a^{\T}\mb x\right|^{2}-\left|\mb a^{\T}\mb x_{\star}\right|^{2}\right)^{2}$
has a spiked Hessian because $-\nabla^{2}R\left(\mb 0\right)=4\mb x_{\star}\mb x_{\star}^{\T} + 2\left\lVert \mb x_{\star}\right\rVert _{2}^{2}\mb I$
whose spectral gap is $\gamma_{\star}=4\left\lVert \mb x_{\star}\right\rVert _{2}^{2}$.  Indeed, the leading eigenvector of $\nabla^2R_M(\mb 0)$ can be shown to be sufficiently correlated with $\mb{x}_\star$ for $M\gtrsim N\log N$ \cite{Candes_Phase_2014}, and is used as the anchor vector in \cite{Bahmani-Phase-2016,bahmani17fl}.

\subsection{Anchors for structured regression} 
How does construction of the anchor change if we need to estimate
a structured ground truth (e.g., a sparse vector)? In this scenario,
the situation is more complicated compared to what described above
mainly because computationally efficient methods may not achieve optimal sample complexity. 

Ideally, we would have a method for constructing the anchor vector from a number of observations that does not dominate the sample complexity of the estimator.  This may be possible if we impose an explicit structural constraint
$\mb u\in\mc S$ with $\mc S$ denoting a prescribed set of structured
vectors; namely we would have 
\begin{align}
\mb a_{0} & \in\argmax_{\mb u\in\mbb S^{N-1},\mb u\in\mc S}\ \langle-\nabla^{k}R_{M}\left(\mb 0\right),\mb u^{\otimes k}\rangle\,.\label{eq:structured-anchor}
\end{align}
Regardless of whether this construction can achieve \eqref{eq:anchor} at an appropriate sample complexity, there is no guarantee that solving \eqref{eq:structured-anchor} is computationally tractable.
For some of the problems discussed above, there are relaxations
of \eqref{eq:structured-anchor} that produce an anchor, but at the cost 
of increasing $M$ well beyond the required sample complexity
\eqref{eq:sample-complexity} of the main estimation procedure. 


We can turn again to the phase retrieval problem as a specific example; this time considering the target $\mb{x}_\star$ to be \textit{sparse}.
Ignoring the computational restrictions, if $\mc{S}$ is the set of (sufficiently) sparse vectors, a brute force estimator generally solves \eqref{eq:structured-anchor} for $k=2$ by searching over all possible sparse support sets. This estimator can produce
the desired $\mb a_{0}$ with (near) optimal sample complexity as we only need small submatrices of $-\nabla^{2}R_{M}\left(\mb 0\right)$ to concentrate around their expected values. For sparse phase retrieval, the desired concentration occurs when $M$ (i.e., the number of samples) grows (nearly) linearly with the sparsity of $\mb{x}_\star$  (i.e., $\left\lVert \mb x_{\star}\right\rVert _{0}$). However, computationally tractable relaxations of \eqref{eq:structured-anchor}, which rely on diagonal thresholding or mixed nuclear-norm  $\ell_1$-norm regularization, require $M$ to grow quadratically with $\left\lVert \mb x_{\star}\right\rVert _{0}$ to guarantee accuracy. This situation is similar to the case of \emph{sparse principal component
analysis} (SPCA) \citep{Zou-SPCA-2006,Johnstone-Consistency-2009}, where the goal is to estimate the sparse principal component of a (covariance) matrix from an empirical (covariance) matrix. It is widely believed that computationally efficient estimators cannot achieve the optimal sample complexity in SPCA \cite{Berthet_Complexity_2013}. Suboptimality of mixed nuclear-norm $\ell_1$-norm regularization is also shown in estimation of simultaneously sparse and low-rank matrices \cite{Oymak-Simultaneously-2015}.

\section{\label{sec:proof}Proof of the main result}

In this section we provide a proof of Theorem \ref{thm:main-theorem}.
The argument is based on the small-ball method introduced in \citep{Koltchinskii-Bounding-2015,Mendelson-Learning-2014}
with minor modifications. Our derivations mostly follow the exposition
of this method in \citep{Tropp-Convex-2015}.
\begin{proof}[Proof of Theorem \ref{thm:main-theorem}]
For $t\ge0$, let $\psi_{t}\left(s\right)\defeq\left(s\right)_{+}-\left(s-t\right)_{+}$
which is a \emph{contraction} (i.e., $\left|\psi_{\tau}\left(s_{2}\right)\!-\!\psi_{\tau}\left(s_{1}\right)\right|\le\left|s_{2}-s_{1}\right|$
for all $s_{1},s_{2}\in\mbb R$). We can bound $R_{M}^{\,+}\left(\mb x_{\star}+\mb h\right)$
from below as
\begin{align}
R_{M}^{\,+}\left(\mb x_{\star}+\mb h\right) & =\frac{1}{M}\sum_{m=1}^{M}\left(f_{m}\left(\mb x_{\star}+\mb h\right)-y_{m}\right)_{+}\nonumber \\
 & =\frac{1}{M}\sum_{m=1}^{M}\left(f_{m}\left(\mb x_{\star}+\mb h\right)-f_{m}\left(\mb x_{\star}\right)-\xi_{m}\right)_{+}\nonumber \\
 & \ge\frac{1}{M}\sum_{m=1}^{M}\left(\left\langle \nabla f_{m}\left(\mb x_{\star}\right),\mb h\right\rangle -\xi_{m}\right)_{+}\nonumber \\
 & \ge\frac{1}{M}\sum_{m=1}^{M}\left(\left\langle \nabla f_{m}\left(\mb x_{\star}\right),\mb h\right\rangle \right)_{+}-\frac{1}{M}\sum_{m=1}^{M}\left(\xi_{m}\right)_{+}\nonumber \\
 & \ge\frac{1}{M}\sum_{m=1}^{M}\psi_{\tau\left\lVert \mb h\right\rVert _{2}}\left(\langle\nabla f_{m}\left(\mb x_{\star}\right),\mb h\rangle\right)-\frac{1}{M}\sum_{m=1}^{M}\left(\xi_{m}\right)_{+}\,,\label{eq:main-inequality}
\end{align}
where the inequalities hold, respectively, because $f_{m}\left(\cdot\right)$
is convex, $\left(\cdot\right)_{+}$ is subadditive, and $\left(s\right)_{+}\ge\psi_{t}\left(s\right)$.
Furthermore, using the fact that $t\bbone\left(s\ge t\right)\le\psi_{t}\left(s\right)$,
we have
\begin{align*}
\tau\left\lVert \mb h\right\rVert _{2}\P\left(\langle\nabla f\left(\mb x_{\star}\right),\mb h\rangle\ge\tau\left\lVert \mb h\right\rVert _{2}\right) & =\tau\left\lVert \mb h\right\rVert _{2}\,\E\left(\bbone\left(\langle\nabla f\left(\mb x_{\star}\right),\mb h\rangle\ge\tau\left\lVert \mb h\right\rVert _{2}\right)\right)\\
 & \le\E\left(\psi_{\tau\left\lVert \mb h\right\rVert _{2}}\left(\langle\nabla f\left(\mb x_{\star}\right),\mb h\rangle\right)\right)
\end{align*}
 Therefore, by adding and subtracting the sides of the inequality
above in the right-hand side of \eqref{eq:main-inequality} and some
rearrangement we obtain
\begin{equation}
\begin{aligned}R_{M}^{\,+}\left(\mb x_{\star}+\mb h\right) & \ge\tau\left\lVert \mb h\right\rVert _{2}\P\left(\langle\nabla f\left(\mb x_{\star}\right),\mb h\rangle\ge\tau\left\lVert \mb h\right\rVert _{2}\right)-\frac{1}{M}\sum_{m=1}^{M}\left(\xi_{m}\right)_{+}\\
 & \hspace{3ex}+\frac{1}{M}\sum_{m=1}^{M}\psi_{\tau\left\lVert \mb h\right\rVert _{2}}\left(\langle\nabla f_{m}\left(\mb x_{\star}\right),\mb h\rangle\right)-\E\left(\psi_{\tau\left\lVert \mb h\right\rVert _{2}}\left(\langle\nabla f\left(\mb x_{\star}\right),\mb h\rangle\right)\right)\,.
\end{aligned}
\label{eq:penultimate-inequality}
\end{equation}
We only need to establish a uniform lower bound over $\mb h\in\mc A_{\delta}$
for the expression in the second line of \eqref{eq:penultimate-inequality}.
It is easy to verify that for every $\alpha,t\ge0$ and $s\in\mbb R$
the identity $\psi_{\alpha t}\left(s\right)=t\psi_{\alpha}\left(\frac{s}{t}\right)$
holds.\footnote{Because $\psi_{\alpha}\left(\cdot\right)$ is bounded, we can treat
$t=0$ as $t\to0$ to avoid the issue of division by zero.} Thus, we write 
\[
\begin{aligned}\frac{1}{M}\sum_{m=1}^{M}\psi_{\tau\left\lVert \mb h\right\rVert _{2}}\left(\langle\nabla f_{m}\left(\mb x_{\star}\right),\mb h\rangle\right)-\E\left(\psi_{\tau\left\lVert \mb h\right\rVert _{2}}\left(\langle\nabla f\left(\mb x_{\star}\right),\mb h\rangle\right)\right)\\
=-\frac{1}{M}\left\lVert \mb h\right\rVert _{2}\sum_{m=1}^{M}\E\left(\psi_{\tau}\left(\langle\nabla f\left(\mb x_{\star}\right),\frac{\mb h}{\left\lVert \mb h\right\rVert _{2}}\rangle\right)\right)-\psi_{\tau}\left(\langle\nabla f_{m}\left(\mb x_{\star}\right),\frac{\mb h}{\left\lVert \mb h\right\rVert _{2}}\rangle\right)\,,
\end{aligned}
\]
and we only need to find an upper bound for 
\begin{align*}
F_{\mc A_{\delta}}\left(f_{1},f_{2},\dotsc,f_{M}\right) & \defeq\sup_{\mb h\in\mc A_{\delta}}\,\frac{1}{M}\sum_{m=1}^{M}\E\left(\psi_{\tau}\left(\langle\nabla f\left(\mb x_{\star}\right),\frac{\mb h}{\left\lVert \mb h\right\rVert _{2}}\rangle\right)\right)-\psi_{\tau}\left(\langle\nabla f_{m}\left(\mb x_{\star}\right),\frac{\mb h}{\left\lVert \mb h\right\rVert _{2}}\rangle\right)\,.
\end{align*}
Since $\psi_{\tau}\left(\cdot\right)$ is bounded by $0$ and $\tau$, if we replace any one of the vectors $\nabla f_m(\mb{x}_\star)$ with some other vector while keeping the rest intact, the value of $F(\cdot)$ wont change by more than $\tau/M$. Therefore, a standard application of the \emph{bounded difference inequality}
\citep{McDiarmid-Bounded-1989} to $F_{\mc A_{\delta}}\left(\cdot\right)$
shows that for any $t>0$ we have
\begin{align}
F_{\mc A_{\delta}}\left(f_{1},f_{2},\dotsc,f_{M}\right) & \le\E F_{\mc A_{\delta}}\left(f_{1},f_{2},\dotsc,f_{M}\right)+\frac{t\tau}{\sqrt{M}}\,,\label{eq:BDI}
\end{align}
 with probability $\ge1-\exp(-2t^2)$. 
 
It remains only to upper bound $\E F_{\mc A_{\delta}}\!\left(f_{1},f_{2},\dotsc,f_{M}\right)$.
Writing the inner expectation in $\E F_{\mc A_{\delta}}\!\left(f_{1},f_{2},\dotsc,f_{M}\right)$
with respect to random functions $\widetilde{f}_{m}$ that are i.i.d. copies of $f$,
independent of everything else, we obtain
\begin{align*}
\E F_{\mc A_{\delta}}\left(f_{1},f_{2},\dotsc,f_{M}\right) & =\E\sup_{\mb h\in\mc A_{\delta}}\,\frac{1}{M}\sum_{m=1}^{M}\E\left(\psi_{\tau}\left(\langle\nabla f\left(\mb x_{\star}\right),\frac{\mb h}{\left\lVert \mb h\right\rVert _{2}}\rangle\right)\right)-\psi_{\tau}\left(\langle\nabla f_{m}\left(\mb x_{\star}\right),\frac{\mb h}{\left\lVert \mb h\right\rVert _{2}}\rangle\right)\\
 & \le\E\sup_{\mb h\in\mc A_{\delta}}\,\frac{1}{M}\sum_{m=1}^{M}\psi_{\tau}\left(\langle\nabla\widetilde{f}_{m}\left(\mb x_{\star}\right),\frac{\mb h}{\left\lVert \mb h\right\rVert _{2}}\rangle\right)-\psi_{\tau}\left(\langle\nabla f_{m}\left(\mb x_{\star}\right),\frac{\mb h}{\left\lVert \mb h\right\rVert _{2}}\rangle\right)\,,
\end{align*}
where in the second line the expectation with respect to the $\widetilde{f}_m$ is pulled outside of the supremum.
The next step is the standard \emph{symmetrization} argument \citep[see e.g.,][Lemma 2.3.1]{vanDerVaart-Weak-1996}.
Since $f_{m}$ and $\widetilde{f}_{m}$ are i.i.d., multiplying each
summand in the right-hand side of the inequality above by a corresponding
$\epsilon_{m}=\pm1$ does not change the distribution of the random
process and thereby the desired expected value. Take $\epsilon_{1},\epsilon_{2},\dotsc,\epsilon_{M}$
to be i.i.d. Rademacher random variables independent of everything
else. Therefore, we obtain 
\begin{align}
\E F_{\mc A_{\delta}}\left(f_{1},f_{2},\dotsc,f_{M}\right) & \le\E\sup_{\mb h\in\mc A_{\delta}}\left[\,\frac{1}{M}\sum_{m=1}^{M}\epsilon_{m}\psi_{\tau}\left(\langle\nabla\widetilde{f}_{m}\left(\mb x_{\star}\right),\frac{\mb h}{\left\lVert \mb h\right\rVert _{2}}\rangle\right)\right.\nonumber \\
 & \hspace{6em}\left.-\epsilon_{m}\psi_{\tau}\left(\langle\nabla f_{m}\left(\mb x_{\star}\right),\frac{\mb h}{\left\lVert \mb h\right\rVert _{2}}\rangle\right)\right]\nonumber \\
 & \le\E\sup_{\mb h\in\mc A_{\delta}}\,\frac{1}{M}\sum_{m=1}^{M}\epsilon_{m}\psi_{\tau}\left(\langle\nabla\widetilde{f}_{m}\left(\mb x_{\star}\right),\frac{\mb h}{\left\lVert \mb h\right\rVert _{2}}\rangle\right)\nonumber \\
 & \hspace{3ex}+\E\sup_{\mb h\in\mc A_{\delta}}\,\frac{1}{M}\sum_{m=1}^{M}-\epsilon_{m}\psi_{\tau}\left(\langle\nabla f_{m}\left(\mb x_{\star}\right),\frac{\mb h}{\left\lVert \mb h\right\rVert _{2}}\rangle\right)\nonumber \\
 & =2\,\E\sup_{\mb h\in\mc A_{\delta}}\,\frac{1}{M}\sum_{m=1}^{M}\epsilon_{m}\psi_{\tau}\left(\langle\nabla f_{m}\left(\mb x_{\star}\right),\frac{\mb h}{\left\lVert \mb h\right\rVert _{2}}\rangle\right)\,.\label{eq:Esup1}
\end{align}
Because $\psi_{\tau}\left(0\right)=0$ and $\psi_{\tau}\left(\cdot\right)$
is a contraction, we can invoke the \emph{Rademacher contraction
principle} \citep[Theorem 4.12]{Ledoux-Probability-2013} to show
that 
\begin{align*}
\E\sup_{\mb h\in\mc A_{\delta}}\,\frac{1}{M}\sum_{m=1}^{M}\epsilon_{m}\psi_{\tau}\left(\langle\nabla f_{m}\left(\mb x_{\star}\right),\frac{\mb h}{\left\lVert \mb h\right\rVert _{2}}\rangle\right) & \le\E\sup_{\mb h\in\mc A_{\delta}}\,\frac{1}{M}\sum_{m=1}^{M}\epsilon_{m}\langle\nabla f_{m}\left(\mb x_{\star}\right),\frac{\mb h}{\left\lVert \mb h\right\rVert _{2}}\rangle\\
 & =\frac{1}{\sqrt{M}}\,\mfk C_{M}\left(\mc A_{\delta}\right)\,,
\end{align*}
where $\mfk C_{M}\left(\mc A_{\delta}\right)$ is defined in \eqref{eq:complexity}
and can be interpreted as a measure of \emph{complexity} of $\mc A_{\delta}$
with respect to the law of $f$. It follows from \eqref{eq:Esup1}
that 
\begin{align*}
\E F_{\mc A_{\delta}}\left(f_{1},f_{2},\dotsc,f_{M}\right) & \le\frac{2}{\sqrt{M}}\,\mfk C_{M}\left(\mc A_{\delta}\right)\,.
\end{align*}
This bound together with \eqref{eq:BDI} guarantees that 
\begin{align*}
F_{\mc A_{\delta}}\left(f_{1},f_{2},\dotsc,f_{M}\right) & \le\frac{2}{\sqrt{M}}\,\mfk C_{M}\left(\mc A_{\delta}\right)+\frac{t\tau}{\sqrt{M}}\,,
\end{align*}
 with probability $\ge1-\exp(-2t^{2})$. Finally, on the same
event, it follows from \eqref{eq:penultimate-inequality} that for
all $\mb h\in\mc A_{\delta}$ we have
\begin{align*}
R_{M}^{\,+}\left(\mb x_{\star}+\mb h\right) & \ge\tau\left\lVert \mb h\right\rVert _{2}\P\left(\langle\nabla f\left(\mb x_{\star}\right),\mb h\rangle\ge\tau\left\lVert \mb h\right\rVert _{2}\right)-\frac{1}{M}\sum_{m=1}^{M}\left(\xi_{m}\right)_{+}\\
 & \hspace{3ex}-\frac{2\left\lVert \mb h\right\rVert _{2}}{\sqrt{M}}\,\mfk C_{M}\left(\mc A_{\delta}\right)-\frac{t\tau \left\lVert \mb h\right\rVert _{2}}{\sqrt{M}}\\
 & \ge\tau\left\lVert \mb h\right\rVert _{2}p_{\tau}\left(\mc A_{\delta}\right)-\frac{1}{M}\sum_{m=1}^{M}\left(\xi_{m}\right)_{+}-\frac{2\left\lVert \mb h\right\rVert _{2}}{\sqrt{M}}\,\mfk C_{M}\left(\mc A_{\delta}\right)-\frac{t\tau\left\lVert \mb h\right\rVert _{2}}{\sqrt{M}}\,.
\end{align*}
If $\mb x_{\star}+\mb h$ is feasible in \eqref{eq:anchored-regression},
then 
\begin{align*}
R_{M}^{\,+}\left(\mb x_{\star}+\mb h\right) & \le R_{M}^{\,+}\left(\mb x_{\star}\right)+\varepsilon\\
 & =\frac{1}{M}\sum_{m=1}^{M}\left(-\xi_{m}\right)_{+}+\varepsilon\,.
\end{align*}
Therefore, if $\widehat{\mb x}$ denotes a solution to \eqref{eq:anchored-regression}
and $\mb h=\widehat{\mb x}-\mb x_{\star}$, then we have 
\begin{align*}
\tau\left\lVert \mb h\right\rVert _{2}p_{\tau}\left(\mc A_{\delta}\right)-\frac{1}{M}\sum_{m=1}^{M}\left(\xi_{m}\right)_{+}\\
-\frac{2\left\lVert \mb h\right\rVert _{2}}{\sqrt{M}}\,\mfk C_{M}\left(\mc A_{\delta}\right)-\frac{t\tau\left\lVert \mb h\right\rVert _{2}}{\sqrt{M}} & \le\frac{1}{M}\sum_{m=1}^{M}\left(-\xi_{m}\right)_{+}+\varepsilon\,,
\end{align*}
 or equivalently
\begin{align*}
\left(\tau p_{\tau}\left(\mc A_{\delta}\right)-\frac{2\mfk C_{M}\left(\mc A_{\delta}\right)+t\tau}{\sqrt{M}}\right)\left\lVert \mb h\right\rVert _{2} & \le\frac{1}{M}\sum_{m=1}^{M}\left|\xi_{m}\right|+\varepsilon\,.
\end{align*}
 Applying the assumption \eqref{eq:sample-complexity} on the left-hand side completes the proof as
 \[\left\lVert\widehat{\mb{x}}-\mb{x}_\star\right\rVert_2 = \left\lVert\mb{h}\right\rVert_2\le \frac{2}{\tau p_\tau\left(\mc{A}_\delta\right)}\left(\frac{1}{M}\sum_{m=1}^{M}\left|\xi_{m}\right|+\varepsilon\right)\,.\]
 
\end{proof}
\bibliographystyle{abbrvnat}
\bibliography{references}

\end{document}